\documentclass[letterpaper]{article} 
\usepackage{aaai2027}  
\usepackage[hyphens]{url}  
\usepackage{graphicx} 
\usepackage{natbib}  
\usepackage{caption} 
\usepackage{algorithm}
\usepackage{algorithmic}

\usepackage{newfloat}
\usepackage{listings}
\DeclareCaptionStyle{ruled}{labelfont=normalfont,labelsep=colon,strut=off} 
\floatstyle{ruled}
\newfloat{listing}{tb}{lst}{}
\floatname{listing}{Listing}

\usepackage{booktabs}

\usepackage{multirow} 
\usepackage{booktabs} 
\usepackage{multirow}
\usepackage{amsmath}
\usepackage{graphicx}   
\usepackage{multirow}   
\usepackage{array}      
\usepackage{booktabs}
\usepackage{multirow}
\usepackage{makecell}
\usepackage{graphicx}
\usepackage{tabularx}
\usepackage{multirow}    
\usepackage{booktabs}    
\usepackage{graphicx} 
\usepackage{booktabs}    
\usepackage{multirow}    
\usepackage{graphicx}    
\usepackage{amsmath}
\usepackage{algorithm}
\usepackage{algorithmic}

\usepackage{newfloat}
\usepackage{listings}
\usepackage{amsmath} 
\usepackage{amssymb}
\usepackage{multirow}
\usepackage{graphicx} 
\usepackage{natbib}  
\usepackage{caption} 
\usepackage{booktabs}
\usepackage{multirow} 
\usepackage{booktabs} 
\usepackage{multirow}
\usepackage{amsmath}
\usepackage{graphicx}   
\usepackage{multirow}   
\usepackage{array}      
\usepackage{booktabs}
\usepackage{multirow}
\usepackage{makecell}
\usepackage{graphicx}
\usepackage{tabularx}
\usepackage{multirow}    
\usepackage{booktabs}    
\usepackage{graphicx} 
\usepackage{booktabs}    
\usepackage{multirow}    
\usepackage{graphicx}    
\usepackage{amsmath}
\usepackage{amsthm}
\usepackage{algorithm}
\usepackage{algorithmic}
\usepackage{amssymb}
\usepackage{multirow}
\newtheorem{theorem}{Theorem}[section] 

\newtheorem{corollary}[theorem]{Corollary}

\usepackage{newfloat}
\usepackage{listings}
\DeclareCaptionStyle{ruled}{labelfont=normalfont,labelsep=colon,strut=off} 
\floatstyle{ruled}
\newfloat{listing}{tb}{lst}{}
\floatname{listing}{Listing}

\usepackage{booktabs}

\title{FineSID: Scalable and Efficient Semantic Identifier Learning \\for Generative Recommendation}
\author{
    Song-Li Wu\textsuperscript{\rm 1}\equalcontrib,
    Weinan Gan\textsuperscript{\rm 2}\equalcontrib,
    Zhaocheng Du\corresponding\textsuperscript{\rm 2},
    Xianquan Wang\textsuperscript{\rm 3},
    Jingyi Wang\textsuperscript{\rm 1}
}
\affiliations{
    \textsuperscript{\rm 1}Tsinghua University,
    \textsuperscript{\rm 2}Huawei Noah’s Ark Lab,
    \textsuperscript{\rm 3}University of Science and Technology of China
    
}

\begin{document}

\maketitle

\begin{abstract}
A critical prerequisite of generative recommendation is designing semantic identifiers (SIDs) that are both scalable to large item sets and efficiently learnable. Existing SID learning methods fundamentally rely on Top‑1 hard assignment during vector quantization. While heuristic strategies—such as clustering-based initialization or forced post-hoc collision resolution—can artificially inflate codebook coverage, they often disrupt end-to-end semantic alignment and fail to address the underlying optimization bottleneck: sparse gradient propagation. In standard Top-1 assignment, gradients concentrate on a narrow subset of frequently selected codewords, leaving the majority inherently under-trained and causing severe SID collisions. To overcome this limitation natively without relying on complex initialization priors, we propose FineSID, a unified quantization framework that moves beyond Top‑1 assignment by enabling fine‑grained gradient propagation across the entire codebook. Instead of updating only a single selected codeword, FineSID distributes learning signals to all codewords in a soft, differentiable manner. This design promotes globally balanced codebook optimization while strictly preserving semantic consistency, effectively alleviating SID collisions and stabilizing training in large, high‑dimensional codebooks. Extensive experiments on multiple public benchmarks demonstrate that FineSID is robust to initialization configurations and consistently improves both codebook utilization and recommendation accuracy. Our work provides a principled, initialization-agnostic solution for semantic identifier learning, advancing the practicality of generative recommendation. Codes are available.
\end{abstract}

\section{Introduction}

Framing recommendation as an autoregressive sequence generation problem has emerged as a promising paradigm for recommendation, enabling unified modeling of users, items, and their interactions~\citep{rajput2023recommender}.
Central to this paradigm is the Semantic Identifier (SID), which bridges continuous item representations and discrete token generation by learning a quantization codebook, typically instantiated via vector-quantized autoencoders such as RQ-VAE~\citep{wang2024learnable}.
By transforming recommendation into a token generation process, SIDs provide a scalable foundation for expressive generative recommendation models.

However, existing SID learning methods predominantly rely on Top-1 hard assignment during discretization, which induces a fundamental optimization bottleneck: gradient updates concentrate on a narrow subset of frequently selected codewords, leaving the majority under-trained~\citep{rajput2023recommender, wang2024learnable}.
While in practice one might employ heuristic strategies to mitigate these symptoms—such as utilizing K-means clustering for robust codebook initialization or resolving residual collisions by artificially assigning items to the next closest available code—these workarounds fail to address the root optimization bottleneck.
Specifically, static initialization strategies provide a favorable starting point but cannot prevent codebook collapse during dynamic, end-to-end training, as Top-1 assignment continually starves low-frequency codes of gradient flow.
Furthermore, arbitrarily reassigning items to sub-optimal, semantically distant codes merely to avoid collisions fundamentally disrupts the semantic fidelity of the identifiers, forcing the generative model to learn distorted item representations.
Consequently, the reliance on Top-1 assignment inevitably leads to a trade-off between severe SID collisions and semantic drift, ultimately degrading both item reconstruction fidelity and downstream recommendation performance—even when the codebook size is substantially increased~\citep{guo2026promise}.

To alleviate SID collisions, prior work has explored two primary directions.
One line of research introduces auxiliary regularization objectives to encourage more balanced codebook utilization~\citep{wang2024learnable, yao2025saviorrec}.
Another line enhances representational expressiveness by augmenting SIDs with additional tokens, thereby expanding the identifier space~\citep{rajput2023recommender, wang2025act}.
However, an effective Semantic Identifier (SID) for generative recommendation must simultaneously satisfy two essential properties: scalability and efficiency~\citep{kong2025minionerec, liu2025understanding}.
Scalability requires the SID space to accommodate large item catalogs with low collision rates, while efficiency demands computationally lightweight autoregressive SID generation to ensure low inference overhead.
From this perspective, existing solutions exhibit fundamental limitations.
Regularization-based methods promote more uniform code usage but often suppress intrinsic semantic differentiation among codewords, effectively shrinking the usable representational capacity of the codebook—particularly for long-tailed or rare items—thereby limiting scalability to large item sets~\citep{Li2025SurveyGenerativeRecommendation}.
In contrast, auxiliary-token approaches alleviate collisions by lengthening SID sequences, which directly increases autoregressive decoding cost and undermines inference efficiency~\citep{wang2025act}.
These limitations jointly motivate a principled SID learning paradigm that performs global optimization over the code space, preserving semantic structure while simultaneously achieving both scalability and efficiency.

To resolve these challenges, we propose FineSID, a unified quantization framework for principled semantic identifier optimization.
FineSID introduces a full-codebook optimization mechanism that overcomes the Top-1 hard assignment bottleneck, enabling learning signals to propagate across all codewords while strictly preserving discrete identifiers for autoregressive generation.
FineSID consists of two tightly coupled components: Global–Local Quantization (GLQ) and a Semantic Consistency Module.
GLQ maintains a soft SID for global optimization alongside a hard SID for discrete generation, thereby balancing expressive codebook capacity with high-fidelity autoregressive decoding.
The Semantic Consistency Module enforces alignment between discrete SIDs and continuous item representations, preventing semantic drift—particularly for low-frequency codewords.
Together, these designs enable FineSID to distribute semantic learning signals across both frequent and long-tailed items, ensuring stable optimization and efficient inference even with large, high-dimensional codebooks.
As a result, FineSID simultaneously achieves scalability to large item sets and efficiency in autoregressive SID generation.

We summarize our contributions as follows:
\begin{itemize}
\item We identify \emph{Top-1 hard assignment} as a fundamental optimization bottleneck in SID learning, revealing that while heuristic initializations and greedy re-assignments can mask symptoms, they inherently trade codebook collapse for semantic distortion, limiting true scalability in generative recommendation.
\item We propose \textbf{FineSID}, a unified quantization framework that enables \emph{global, fine-grained gradient propagation} across the entire codebook while strictly preserving discrete compatibility for autoregressive generation, thereby jointly addressing scalability and efficiency.
\item We design two tightly coupled components—\emph{Global--Local Quantization (GLQ)} and a \emph{Semantic Consistency Module}—that support stable optimization and semantic alignment, particularly for long-tailed items.
\item Extensive experiments on multiple public benchmarks demonstrate that FineSID consistently outperforms existing SID-based methods in both codebook utilization and recommendation accuracy, with particularly pronounced gains under long-tailed item distributions.
\end{itemize}

\section{Related Work}

\subsection{Generative Recommendation Systems}
Generative Recommendation (GR) formulates recommendation as an autoregressive sequence generation problem by representing items with discrete Semantic IDs (SIDs)~\citep{rajput2023recommender,li2025matching,sun2023learning}. A typical GR pipeline involves embedding extraction, SID quantization, and generative model training~\citep{petrov2025efficient,lin2025unified}. Among these, SID quantization is the critical bridge mapping continuous representations to discrete generation tokens, directly determining SID uniqueness, semantic fidelity, and overall recommendation performance~\citep{deldjoo20241st,li2024generative}.

\subsection{SID Collision in Generative Recommendation}
SID collision occurs when distinct items map to identical SID sequences, causing ambiguous item grounding. Prior work mitigates this via two main strategies: (1) training-time regularization to encourage balanced codeword utilization (e.g., entropy-regularized assignment~\citep{yao2025saviorrec} or constrained codebooks~\citep{wang2024learnable}), and (2) augmenting SIDs with auxiliary tokens to expand the representational space~\citep{rajput2023recommender,wang2025act,chen2025onesearch}.
Additionally, while soft quantization methods (e.g., Straight-Through Estimator or Gumbel-Softmax)~\citep{bengio2013estimating, jang2016categorical} enable global gradient flow, they inherently smooth discrete representations and weaken strict item boundaries. This makes them unsuitable for SIDs, which must function as discrete, atomic symbols for autoregressive generation.
Ultimately, GR imposes dual constraints on SID learning: SIDs must be strictly discrete and uniquely identifiable for autoregressive decoding, while being globally optimizable and robust to long-tailed items. Existing methods typically satisfy only a subset of these requirements—often trading semantic fidelity for balanced usage, or inference efficiency for uniqueness. Consequently, SID collision remains a persistent challenge, motivating the need for more principled, globally optimized SID quantization frameworks.

\begin{figure*}[tb]
\centering
\includegraphics[width=0.9\textwidth]{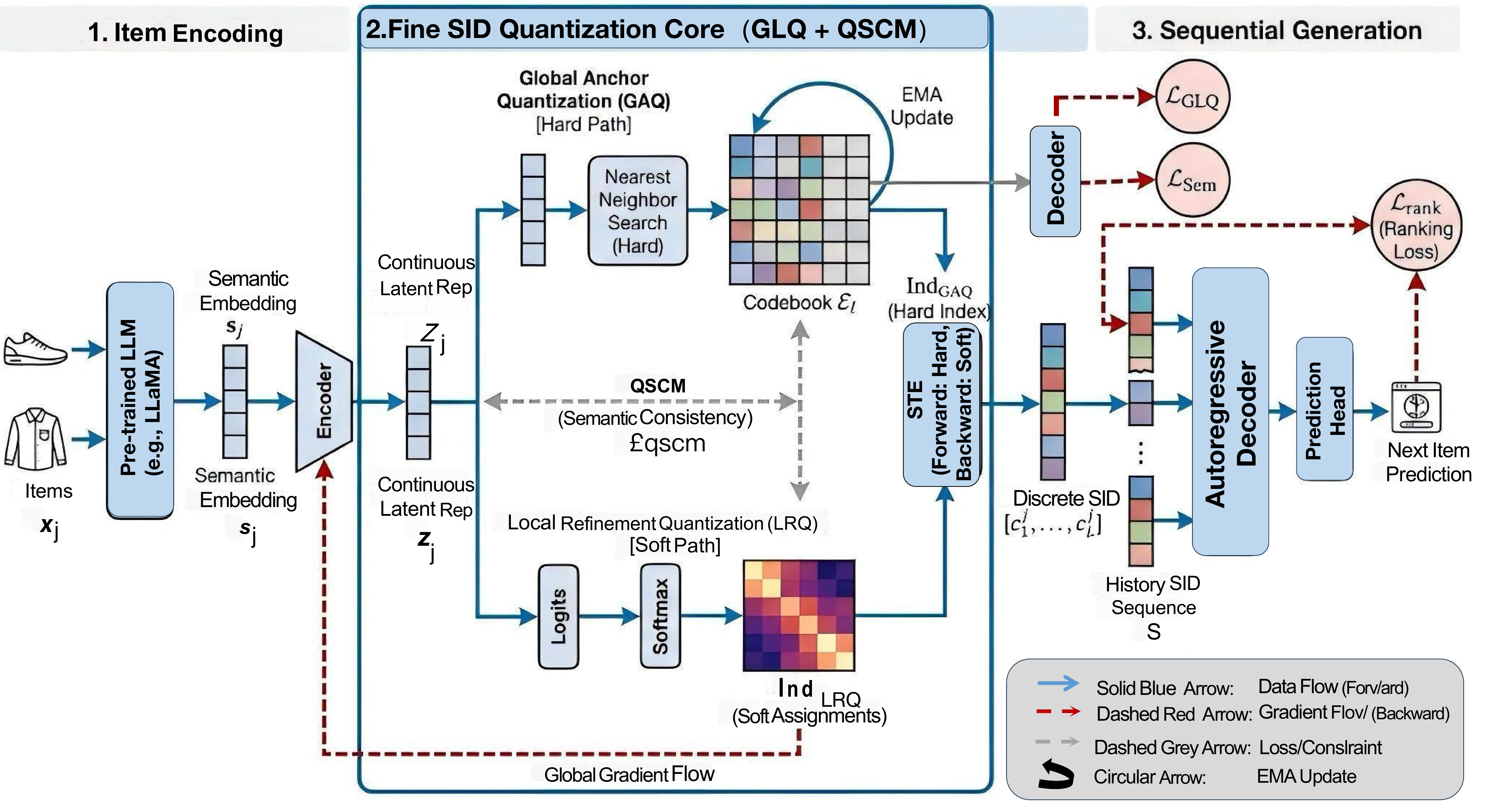}
\caption{Overview of FineSID.}
\label{Overview}
\end{figure*}

\section{Methodology}
To address SID collision in generative recommendation, we propose FineSID, a unified quantization framework that performs principled optimization over the semantic space. FineSID consists of two complementary modules: 
Global--Local Quantization (GLQ) and the Quantization Semantic Consistency Module (QSCM). 
Specifically, \textbf{GLQ} stabilizes and balances SID usage at the \emph{structural level}. It achieves this by combining local, dense gradient updates with global codebook usage tracking to prevent codebook collapse. Meanwhile, \textbf{QSCM} acts at the \emph{representation level} to preserve semantic fidelity, ensuring that the continuous embeddings remain tightly aligned with their discrete SIDs even as the codebook updates. Together, these modules ensure scalable, efficient, and semantically faithful SID learning for autoregressive recommendation.

\subsection{Problem Formulation and Background}
Let $\mathcal{I}$ denote the universal item set. The interaction history is represented as a sequence $S = [x_1, x_2, \dots, x_t]$ with $x_t \in \mathcal{I}$. The objective is to predict the next item $x_{t+1}$, which we formulate as a token-based sequence generation task.

\noindent\textbf{Item Tokenization.} 
Instead of representing items with opaque ID embeddings, we tokenize each item into a sequence of Semantic Identifiers (SIDs). This design provides two key benefits: it enables generalization to unseen items and ensures compatibility with autoregressive generative models. For each item $x_j$, we first extract a dense semantic embedding $\mathbf{s_j} \in \mathbb{R}^d$ from its content features (e.g., title or description) using a pre-trained language model such as LLaMA-7B~\citep{touvron2023llama}. This embedding is then encoded into a latent representation $\mathbf{z_j} = \mathrm{Encoder}(\mathbf{s_j})$. 

To discretize $\mathbf{z_j}$, we employ a Residual Quantized VAE (RQ-VAE) tokenizer~\citep{lee2022autoregressive} spanning $L$ quantization levels. We perform residual quantization iteratively. Starting from the initial residual $\mathbf{r}^{(0)}_j = \mathbf{z_j}$, each level $l$ selects the nearest SID from the codebook $\mathcal{E}_l = [\mathbf{e}^{(l)}_1, \dots, \mathbf{e}^{(l)}_K]$:
\begin{equation}
c^j_l = \arg\min_{i \in \{1, \dots, K\}} \|\mathbf{r}^{(l-1)}_j - \mathbf{e}^{(l)}_i\|^2, \quad
\mathbf{r}^{(l)}_j = \mathbf{r}^{(l-1)}_j - \mathbf{e}^{(l)}_{c^j_l}.
\end{equation}
The item is thus tokenized into an SID sequence: $[c^j_1, c^j_2, \dots, c^j_L]$. The quantized embedding $\hat{\mathbf{z_j}}=\sum_{l=1}^L  \mathbf{e}^{(l)}_{c^j_l}$ is then decoded to reconstruct the semantic embedding $\hat{\mathbf{s_j}}$. A reconstruction loss ensures semantic faithfulness:
\begin{align}
\mathcal{L}_{\mathrm{Sem}} =& \|\mathbf{s_j} - \hat{\mathbf{s_j}}\|^2 + \sum_{l=1}^{L} \left( \|\mathrm{sg}[\mathbf{r}^{(l-1)}_j] - \mathbf{e}^{(l)}_{c^j_l}\|^2 \right. \notag \\
&+ \left.  \|\mathbf{r}^{(l-1)}_j - \mathrm{sg}[\mathbf{e}^{(l)}_{c^j_l}]\|^2 \right),
\end{align}
where $\mathrm{sg}[\cdot]$ is the stop-gradient operator. The full user interaction history $S$ thus becomes a flattened token sequence $ [c^1_1, \dots, c^1_L, \dots, c^t_1, \dots, c^t_L]$.

\noindent\textbf{Autoregressive Generation.}  
The model predicts the next-item token sequence $Y = [c^{t+1}_1, \dots, c^{t+1}_L]$ autoregressively. To strengthen ranking performance, we use a temperature-scaled softmax cross-entropy loss~\citep{wang2024learnable}:
\begin{align}
&\mathcal{L}_{\mathrm{rank}} = -\frac{1}{|Y|} \sum_{l=1}^{L} \log P_\theta(c^{t+1}_l \mid c^{t+1}_{<l}, X),\\
&P_\theta(c^{t+1}_l \mid c^{t+1}_{<l}, X) = \frac{\exp(p(c^{t+1}_l))}{\sum_{v \in V} \exp(p(v))},
\end{align}
where $V$ is the complete token vocabulary.

\subsection{Global-Local Quantization (GLQ)}
Traditional Top-1 hard assignment restricts gradient updates to a single winning codeword per item. This bottleneck results in sparse gradients, causing many SIDs to become inactive early in training (codebook collapse). To address this, we introduce Global–Local Quantization (GLQ), consisting of two complementary mechanisms: LRQ for dense gradient flow and GAQ for global frequency balancing.

\paragraph{Local Refinement Quantization (LRQ)}
The role of LRQ is to provide dense, token-level gradient propagation. By distributing learning signals across the entire codebook—including rarely used tokens—LRQ effectively circumvents the gradient sparsity problem. Specifically, LRQ computes a soft assignment probability distribution between the input residuals and all SIDs:
\begin{align}
\text{Ind}_{\mathrm{LRQ}(t)}^{(l)} =  \text{softmax}(\text{logits}_{(t)}^{(l)}),
\end{align}
where $\text{logits}_{(t)}^{(l)} \in \mathbb{R}^K$ represents the similarities (e.g., negative scaled distance) between the item's residual and all codewords in $\mathcal{E}_l$ at training step $t$, and $\text{Ind}_{\mathrm{LRQ}(t)}^{(l)}$ is the resulting soft assignment vector. 

Using the straight-through estimator (STE), LRQ maintains differentiable quantization during training while strictly outputting discrete hard assignments for inference:
\begin{align}
\text{Ind}^{(l)}_{(t)} = \text{Ind}_{\text{GAQ}(t)}^{(l)} - \mathrm{sg}[\text{Ind}_{\mathrm{LRQ}(t)}^{(l)} ] + \text{Ind}_{\mathrm{LRQ}(t)}^{(l)}.
\end{align}
Here, $\text{Ind}_{\text{GAQ}(t)}^{(l)}$ denotes the one-hot encoded vector of the actual Top-1 hard assignment. This formulation allows the forward pass to utilize the hard index (from GAQ), while the backward pass routes gradients through the soft probabilities (from LRQ), preventing "dead" tokens.

\paragraph{Global Anchor Quantization (GAQ)}
While LRQ ensures all codes receive gradients, low-frequency SIDs may still be overwhelmed by gradients from high-frequency ones. GAQ acts as a global regulator, dynamically adjusting codeword update rates based on their historical utilization to ensure long-term balance. It utilizes an exponential moving average (EMA) of usage frequencies:
\begin{align}
N_{k(t)}^{(l)} = \gamma N_{k(t-1)}^{(l)} + (1-\gamma)\frac{n_{k(t)}^{(l)}}{B},
\end{align}
\begin{align}
\mathbf{e}_{k(t)}^{(l)} = (1-\alpha_{k(t)}^{(l)}) \mathbf{e}_{k(t-1)}^{(l)} + \alpha_{k(t)}^{(l)} \hat{\mathbf{z}}_{k(t)}^{(l)}.
\end{align}
In these equations, $N_{k(t)}^{(l)}$ represents the smoothed usage frequency of the $k$-th codeword at level $l$ and step $t$, $\gamma \in (0,1)$ is the EMA decay factor, and $n_{k(t)}^{(l)}$ is the actual number of times the $k$-th codeword was selected in the current batch of size $B$. For the codeword update, $\mathbf{e}_{k(t)}^{(l)}$ is the $k$-th codeword embedding, $\alpha_{k(t)}^{(l)}$ acts as an adaptive update rate (inversely scaled by the usage frequency $N_{k(t)}^{(l)}$), and $\hat{\mathbf{z}}_{k(t)}^{(l)}$ is the aggregated mean of the latent residuals assigned to the $k$-th codeword in the current batch. This mechanism continuously steers underutilized codes toward high-density semantic regions.

\subsection{Quantization Semantic Consistency Module (QSCM)}
Because GLQ distributes updates across the codebook, there is a risk that codes may drift from their original semantic meanings. The role of QSCM is to enforce bidirectional alignment between the continuous embeddings and their discrete semantic anchors, ensuring semantic faithfulness throughout training. 

QSCM bridges this gap by imposing a \emph{quantization semantic consistency loss}:
\begin{align}
\hat{ \mathbf{z}}_{j(t)}&=\sum_{l=1}^L \text{Ind}_{\text{GAQ}{(t)}}^{(l)} \mathcal{E}_l,\\
\mathcal{L}_{\mathrm{QSCM}}
&= \|\hat{ \mathbf{z}}_{j(t)} - \mathrm{sg}[\mathbf{z}_j]\|^2
+ \|\mathrm{sg}[\hat{ \mathbf{z}}_{j(t)}] - \mathbf{z}_j\|^2.
\end{align}
Here, $\hat{ \mathbf{z}}_{j(t)}$ denotes the fully quantized latent representation reconstructed using the one-hot hard assignments $\text{Ind}_{\text{GAQ}{(t)}}^{(l)}$. The first term of the loss pulls the quantized semantic anchors toward the original continuous embedding, refining the codebook to capture input semantics accurately. The second term anchors the continuous embedding to its quantized counterpart, preventing semantic drift during gradient updates. By explicitly regularizing this pathway, QSCM provides the representational stability needed to support GLQ's structural balancing.

\subsection{Optimization Objectives}
In summary, FineSID integrates the structural-level stability of GLQ with the semantic-level alignment of QSCM. GLQ dynamically balances SID utilization to prevent representation collapse, while QSCM enforces semantic faithfulness. The overall framework is trained by jointly optimizing three complementary objectives:
\begin{align}
\mathcal{L}_{\textbf{FineSID}} &= \mathcal{L}_{\mathrm{rank}} + \mathcal{L}_{\mathrm{GLQ}} +  \mathcal{L}_{\mathrm{QSCM}}.
\end{align}
The autoregressive ranking loss $\mathcal{L}_{\mathrm{rank}}$ aligns SIDs with sequential user–item interaction patterns, the item representation reconstruction loss $\mathcal{L}_{\mathrm{GLQ}}$ ensures that SIDs retain the original item content, and $\mathcal{L}_{\mathrm{QSCM}}$ enhances discriminability by strictly aligning the continuous and quantized spaces.
\section{Experiment}
\subsection{Experiment Setup}
\paragraph{Dataset.} To evaluate our method, we experiment on three datasets~\citep{hou2024bridging}, including “Musical Instruments,” “Video Games,” and “Industrial Scientific.” These datasets cover user reviews from May 1996 through September 2023. In line with previous studies~\citep{zheng2024adapting,zhou2020s3}, we filter out users and items with fewer than five interactions using the 5-core criterion. User behavior sequences are then ordered chronologically, and the maximum sequence length for items is capped at 50. Table ~\ref{tab:dataset_stats} provides the statistics of the processed datasets.
\begin{table}[ht]
\centering
\caption{Statistics of the Datasets.}
\begin{tabular}{lcccc}
\toprule
Dataset & \#Users & \#Items & \#Interactions & Sparsity \\
\midrule
Instrument & 57,439 & 24,587 & 511,836 & 99.964\% \\
Scientific & 50,985 & 25,848 & 412,947 & 99.969\% \\
Game & 94,762 & 25,612 & 814,586 & 99.966\% \\
\bottomrule
\end{tabular}
\label{tab:dataset_stats}
\end{table}

\begin{table*}[ht]
\centering
\caption{The overall performance comparisons between the baselines and FineSID. The best and second-best results are highlighted in \textbf{bold} and underlined font, respectively. The improvement is statistically significant with $p < 10^{-2}$ ($\star$: $p < 10^{-2}$, $\star\star$: $p < 10^{-4}$).}
  \resizebox{\textwidth}{!}{%
\begin{tabular}{lcccccccccccc}
\toprule
\multirow{2}{*}{Method} & \multicolumn{4}{c}{Instrument} & \multicolumn{4}{c}{Scientific} & \multicolumn{4}{c}{Game} \\
\cmidrule(lr){2-5} \cmidrule(lr){6-9} \cmidrule(lr){10-13}
 & Recall@5 & Recall@10 & NDCG@5 & NDCG@10 & Recall@5 & Recall@10 & NDCG@5 & NDCG@10 & Recall@5 & Recall@10 & NDCG@5 & NDCG@10 \\
\midrule
Caser & 0.0242 & 0.0392 & 0.0154 & 0.0202 & 0.0172 & 0.0281 & 0.0107 & 0.0142 & 0.0346 & 0.0567 & 0.0221 & 0.0291 \\
GRU4Rec & 0.0345 & 0.0537 & 0.0220 & 0.0281 & 0.0221 & 0.0353 & 0.0144 & 0.0186 & 0.0522 & 0.0831 & 0.0337 & 0.0436 \\
HGN & 0.0319 & 0.0515 & 0.0202 & 0.0265 & 0.0220 & 0.0356 & 0.0138 & 0.0182 & 0.0423 & 0.0694 & 0.0266 & 0.0353 \\
SASRec & 0.0341 & 0.0530 & 0.0217 & 0.0277 & 0.0256 & 0.0406 & 0.0147 & 0.0195 & 0.0517 & 0.0821 & 0.0329 & 0.0426 \\
BERT4Rec & 0.0305 & 0.0483 & 0.0196 & 0.0253 & 0.0180 & 0.0300 & 0.0113 & 0.0151 & 0.0453 & 0.0716 & 0.0294 & 0.0378 \\
FMLP-Rec & 0.0328 & 0.0529 & 0.0206 & 0.0271 & 0.0248 & 0.0388 & 0.0158 & 0.0203 & 0.0535 & 0.0860 & 0.0331 & 0.0435 \\
FDSA & 0.0364 & 0.0557 & 0.0233 & 0.0295 & 0.0261 & 0.0391 & 0.0174 & 0.0216 & 0.0548 & 0.0857 & 0.0353 & 0.0453 \\
S$^3$-Rec & 0.0340 & 0.0538 & 0.0218 & 0.0282 & 0.0253 & 0.0410 & 0.0172 & 0.0218 & 0.0533 & 0.0823 & 0.0351 & 0.0444 \\
\midrule
TIGER & 0.0352 & 0.0507 & 0.0234 & 0.0285 & 0.0192 & 0.0300 & 0.0123 & 0.0158 & 0.0497 & 0.0748 & 0.0343 & 0.0424 \\

LETTER & 0.0372 & 0.0581 & 0.0243 & 0.0310 & 0.0276 & 0.0433 & 0.0179 & 0.0230 & 0.0576 & 0.0901 & 0.0373 & 0.0475 \\
  SaviorRec& 0.0368 & 0.0574 & 0.0242 & 0.0308 & 0.0275 & 0.0431 & 0.0181 & 0.0231 & 0.0570 & 0.0895 & 0.0370 & 0.0471 \\
OneSearch & 0.0375 & 0.0576 & 0.0242 & 0.0306 & 0.0272 & 0.0435 & 0.0174 & 0.0227 & 0.0561 & 0.0891 & 0.0363 & 0.0469 \\
CAR & \underline{0.0402} & \underline{0.0624} & \underline{0.0260} & \underline{0.0331} & \underline{0.0294} & \underline{0.0455} & \underline{0.0190} & \underline{0.0241} & \underline{0.0616} & \underline{0.0947} & \underline{0.0400} & \underline{0.0507} \\
\midrule
FineSID & \textbf{0.0489\textsuperscript{$\star$$\star$}} & \textbf{0.0703\textsuperscript{$\star$$\star$}} & \textbf{0.0326\textsuperscript{$\star$$\star$}} & \textbf{0.0388\textsuperscript{$\star$$\star$}} & \textbf{0.0352\textsuperscript{$\star$$\star$}} & \textbf{0.0514\textsuperscript{$\star$$\star$}} & \textbf{0.0261\textsuperscript{$\star$$\star$}} & \textbf{0.0294\textsuperscript{$\star$$\star$}} & \textbf{0.0664\textsuperscript{$\star$$\star$}} & \textbf{0.1031\textsuperscript{$\star$$\star$}} & \textbf{0.0482\textsuperscript{$\star$$\star$}} & \textbf{0.0594\textsuperscript{$\star$$\star$}}\\ 
\bottomrule
\end{tabular}}
\label{tab:overall_performance}
\end{table*}

\paragraph{Baseline Models.}
We consider a broad set of representative baselines, including classical recommendation models~\citep{tang2018personalized, ma2019hierarchical, hidasi2015session, sun2019bert4rec, kang2018self, zhou2022filter, zhang2019feature, zhou2020s3}.
Generative recommendation methods are generally decomposed into two components: an item tokenizer and an autoregressive generative backbone. 
For the item tokenizer, we evaluate two representative design paradigms: (i) tokenizers optimized with auxiliary training objectives~\citep{wang2024learnable, yao2025saviorrec}, and (ii) tokenizers that expand the item vocabulary via the introduction of additional tokens~\citep{rajput2023recommender, wang2025act, chen2025onesearch}.

\paragraph{Evaluation Settings.} To assess the effectiveness of different sequential recommendation approaches, we adopt two commonly used evaluation metrics: top-$K$ Recall and top-$K$ Normalized Discounted Cumulative Gain (NDCG), with $K$ set to 5 and 10. Consistent with previous work~\citep{rajput2023recommender,zhou2020s3}, we apply the leave-one-out protocol to partition the dataset into training, validation, and test sets. Concretely, for each user, the most recent interaction is reserved for testing, the second most recent for validation, and all remaining interactions are used for model training. We perform full-ranking evaluations across the entire item set to eliminate potential bias from sampling. For generative recommendation models, the beam search size is consistently set to 20.


\paragraph{Implementation Details.}
For all generative recommendation models, following prior work~\citep{rajput2023recommender, wang2024learnable}, we adopt T5 as the autoregressive backbone. The model consists of 6 encoder layers and 6 decoder layers, with a hidden size of 128 and a feed-forward dimension of 512. Each layer employs 4 self-attention heads, each with a head dimension of 64.
The item tokenizer is initialized using an RQ-VAE, where both the encoder and decoder are implemented as 3-layer MLPs. We set the number of codebooks to $K=3$, each containing 256 code embeddings with dimensionality 128.
The entire framework is optimized using AdamW~\citep{rajput2023recommender} with a weight decay of 0.05.
Traditional baselines are implemented using the open-source RecBole framework~\citep{zhao2022recbole,zhao2021recbole}. The remaining baselines are implemented based on their original papers or publicly available codebases~\citep{wang2024learnable,wang2025act}, while TIGER strictly follows the original experimental settings~\citep{rajput2023recommender}. Unless otherwise specified, all models use an embedding dimension of 128.

\subsection{Overall Performance}
We evaluate FineSID on three public recommendation benchmarks, with results summarized in Table~\ref{tab:overall_performance}. Several key observations can be made.
Among traditional sequential recommendation models, FDSA achieves the strongest performance, which can be attributed to its incorporation of additional textual embeddings that enrich item representations. 
For generative recommendation models, we observe a clear performance difference between methods that rely on extra training objectives (e.g., SaviorRec and LETTER) and those that introduce additional identifier tokens (e.g., OneSearch and CAR). The latter generally performs better, as auxiliary training objectives tend to distort semantic information, leading to suboptimal tokenizer learning and ultimately degrading recommendation performance. Although introducing extra IDs improves accuracy, it inevitably increases the size of the identifier space, resulting in higher computational overhead and reduced efficiency.
In contrast, our proposed FineSID achieves the best performance across all three benchmarks and all evaluation metrics, with statistically significant improvements. Notably, FineSID consistently outperforms all baselines by a clear margin, highlighting the advantage of learning fine-grained semantic identifiers for generative recommendation.
We attribute these gains to FineSID’s global optimization of the identifier space, which avoids the limitations of Top-1 local assignment and leads to a more balanced and semantically coherent tokenizer. This design enables more effective utilization of the codebook and, consequently, stronger and more efficient generative recommendation performance.

\begin{table*}[htbp]
    \centering
    \caption{Ablation study of FineSID.}
      \resizebox{\textwidth}{!}{%
    \begin{tabular}{lcccccccccccc}
        \toprule
        \multirow{2}{*}{Method}& \multicolumn{4}{c}{Instrument} & \multicolumn{4}{c}{Scientific} & \multicolumn{4}{c}{Game}  \\
        \cmidrule(lr){2-5} \cmidrule(lr){6-9} \cmidrule(lr){10-13} 
          & Recall@5 & Recall@10 & NDCG@5 & NDCG@10 & Recall@5 & Recall@10 & NDCG@5 & NDCG@10 & Recall@5 & Recall@10 & NDCG@5 & NDCG@10 \\
        \midrule
        w/o GAQ & 0.0431  & 0.0658 & 0.0292 & 0.0351 & 0.0312 & 0.0484  & 0.0228 & 0.0254 & 0.0629 & 0.0966  & 0.0421 & 0.0528  \\
        w/o LRQ & 0.0429  & 0.0657 & 0.0288 & 0.0346 & 0.0307 & 0.0482  & 0.0224 & 0.0251 & 0.0627 & 0.0963  & 0.0418 & 0.0526  \\
        w/o QSCM & 0.0425  & 0.0654 & 0.0284& 0.0337 & 0.0304  & 0.0478 & 0.0222 & 0.0246 & 0.0622  & 0.0958 & 0.0416 & 0.0522  \\
        w/o GLQ  & 0.0419  & 0.0649& 0.0279 & 0.0334 & 0.0297  & 0.0473 & 0.0219 & 0.0241 & 0.0618  & 0.0955 & 0.0413 & 0.0518  \\
        FineSID(Full) &\textbf{0.0489} & \textbf{0.0703} & \textbf{0.0326} & \textbf{0.0388} & \textbf{0.0352} & \textbf{0.0514} & \textbf{0.0261} & \textbf{0.0294} & \textbf{0.0664} & \textbf{0.1031} & \textbf{0.0482} & \textbf{0.0594}\\ 
        \bottomrule
    \end{tabular}}
        \label{Ablation}
\end{table*}

\subsection{Ablation Study}
Table~\ref{Ablation} presents the ablation results for the key components. The full model consistently outperforms all variants, validating the effectiveness of the overall design.
Among all modules, GLQ has the most significant impact: removing it leads to the largest performance drop, highlighting its role as a global optimization mechanism for SID learning. By globally optimizing the identifier space rather than relying on local Top-1 assignments, GLQ enables more effective aggregation of collaborative signals.
The remaining modules provide complementary gains: QSCM maintains semantic consistency, LRQ captures fine-grained semantic distinctions, and GAQ ensures the semantic validity of each SID. Together, these components yield consistent improvements across datasets, confirming the robustness and generalizability of the proposed design.

\subsection{Further Analysis}
\subsubsection{Codebook Analysis}
Figure~\ref{visual2} analyzes FineSID’s Semantic IDs (SIDs) in terms of codebook utilization and balance rate, which measures the uniformity of SID activation via normalized entropy~\citep{jin2005maximum}. FineSID exhibits a continuous, non-zero activation distribution, indicating effective utilization of the entire codebook.
The extra-training-objective-based baseline(SaviorRec) achieve higher balance than extra-ID-based method(CAR) by explicitly regularizing SID usage, albeit at the cost of semantic distortion. In contrast, FineSID achieves a balance rate of 0.4812 without auxiliary objectives or additional IDs, demonstrating that balanced codebook utilization can emerge from global optimization. This property reduces redundancy while enabling SIDs to capture richer semantic variations, improving both efficiency and expressiveness.

\begin{figure}[tb]
  \centering
  \includegraphics[width=0.48\textwidth]{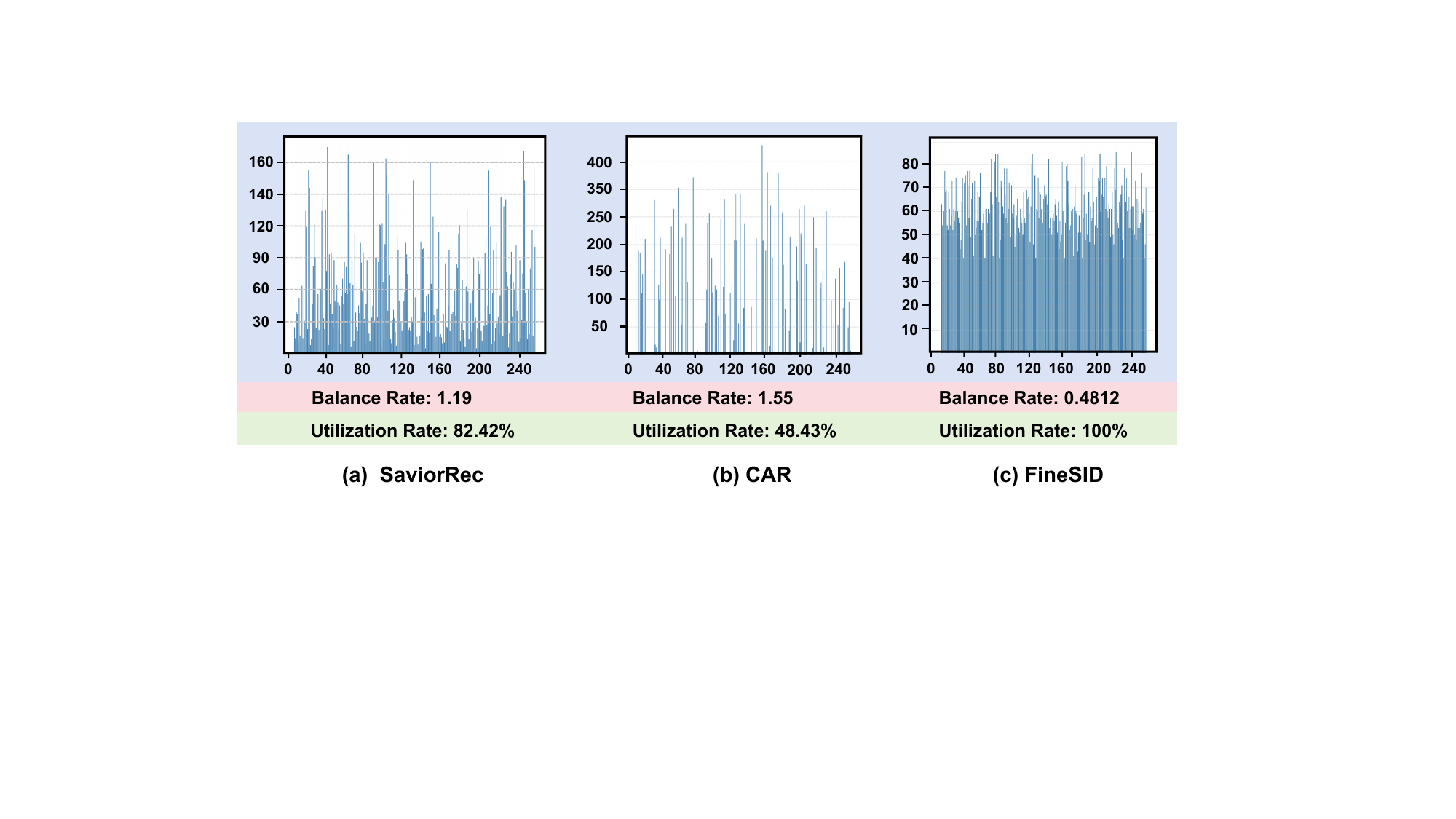}
  \caption{Codebook Utilization Rates and Balance Rates of SID in Different GR Paradigms.
}
  \label{visual2}
\end{figure}

\subsubsection{Impact of Initialization on Semantic ID Generation}
Table~\ref{initialization} reports the impact of different initialization strategies on Semantic ID generation. In our default setting, we adopt random initialization. To systematically analyze its effect, we vary two factors: (i) the embedding backbone, comparing SASRec with BERT~\citep{devlin2019bert} and large language model encoders, and (ii) the initialization strategy, including clustering-based initialization as well as non-clustering approaches such as random initialization and collaborative embeddings extracted from a pretrained SASRec~\citep{kang2018self}.
Three observations emerge. First, random initialization performs competitively and often outperforms more structured initialization schemes, suggesting that neither semantic- nor clustering-based priors are necessary for effective codebook learning. Second, among non-random strategies, collaborative initialization consistently outperforms semantic initialization, indicating that collaborative signals are more aligned with the recommendation objective than purely semantic features. Third, despite these differences, FineSID demonstrates strong robustness to initialization: even with weak or random starting points, it progressively refines the codebook through global optimization. 

\begin{table}[tb]
\centering
\caption{Ablation study of different codebook initialization strategies. NC denotes non-clustering initialization.}
\resizebox{0.48\textwidth}{!}{%
\begin{tabular}{lcccccc}
\toprule
\multirow{2}{*}{Method} & \multicolumn{2}{c}{Instrument} & \multicolumn{2}{c}{Scientific} & \multicolumn{2}{c}{Game} \\
\cmidrule(lr){2-3} \cmidrule(lr){4-5} \cmidrule(lr){6-7}
 & Recall@10 & NDCG@10 & Recall@10 & NDCG@10 & Recall@10 & NDCG@10 \\
\midrule
FineSID(NC)  & 0.0621 & 0.0332 & 0.0458 & 0.0247 & 0.0923 & 0.0510 \\
FineSID(BERT)    & 0.0640 & 0.0345 & 0.0472 & 0.0255 & 0.0941 & 0.0521 \\
FineSID(LLM)    & 0.0642 & 0.0348 & 0.0473 & 0.0258 & 0.0949 & 0.0525 \\
FineSID(SASRec)     & 0.0655 & 0.0352 & 0.0486 & 0.0260 & 0.0958 & 0.0528 \\
FineSID(Random)  & \textbf{0.0703} & \textbf{0.0388} & \textbf{0.0514} & \textbf{0.0294} & \textbf{0.1031} & \textbf{0.0594}  \\
\bottomrule
\end{tabular}
}
\label{initialization}
\end{table}

\subsubsection{Efficiency Evaluation}
We evaluate the training efficiency of FineSID against representative baselines. Table~\ref{tab:efficiency} presents the training time required for convergence and the inference time on three widely used benchmark datasets. FineSID consistently achieves significantly faster convergence, reducing overall training time by more than 50\% compared to the strongest baseline, CAR, across all datasets.
To ensure a fair comparison, all methods are trained under identical conditions, including the same optimizer, learning rate schedule, and early stopping criteria. Notably, FineSID’s ability to propagate gradients across the full codebook allows it to utilize the representation space more effectively, avoiding wasted updates and redundant training stages. This global optimization paradigm not only accelerates representation learning but also maintains or improves recommendation performance. Collectively, these results highlight FineSID’s potential for efficient, scalable deployment in large-scale generative recommendation systems, particularly in scenarios with massive item catalogs where training efficiency is critical.

\begin{table}[htb]
\centering
\caption{Training time (TT, hours) and inference speed (IS, seconds per sample) across benchmark datasets.}
\label{tab:efficiency}
\resizebox{0.48\textwidth}{!}{%
\begin{tabular}{l|cccccccc}
\toprule
Dataset & \multicolumn{2}{c}{LETTER} & \multicolumn{2}{c}{SaviorRec} & \multicolumn{2}{c}{CAR} & \multicolumn{2}{c}{\textbf{FineSID}} \\
& TT & IS & TT & IS & TT & IS & TT & IS \\
\midrule
Game        & 52.9 & 0.0539& 37.5 & 0.0680 &\underline{24.5} & 0.0729&\textbf{9.3}&0.0571 \\
Instrument  & 28.2 & 0.0762 &21.3 & 0.1397&\underline{13.3} & 0.1588&\textbf{4.8} &0.0865 \\
Scientific  & 25.7 & 0.1039&27.8 & 0.1743&\underline{11.9} & 0.1953&\textbf{3.7}  &0.1246\\
\bottomrule
\end{tabular}
}
\end{table}

\subsubsection{Scalability under High-Dimensional Settings}
We evaluate the codebook scalability of FineSID against state-of-the-art baselines by examining \textit{codebook utilization}, a key indicator of resilience to codebook collapse. As illustrated in Figure~\ref{scale}, FineSID (red line) consistently sustains optimal utilization across varying codebook sizes, demonstrating its robustness under high-dimensional settings. In contrast, all competing methods suffer significant degradation as the codebook expands: CAR, in particular, exhibits severe under-utilization, with many SIDs remaining inactive, reflecting inherent collapse tendencies. It is also noteworthy that LETTER, by explicitly encouraging SID dispersion, achieves higher utilization than TIGER. However, without tackling sparse gradient propagation, baseline improvements remain limited compared to FineSID’s holistic stability, affirming it as a principled solution for scaling generative recommendation in high-dimensional codebooks.

\begin{figure}[tb]
  \centering
  \includegraphics[width=0.46\textwidth]{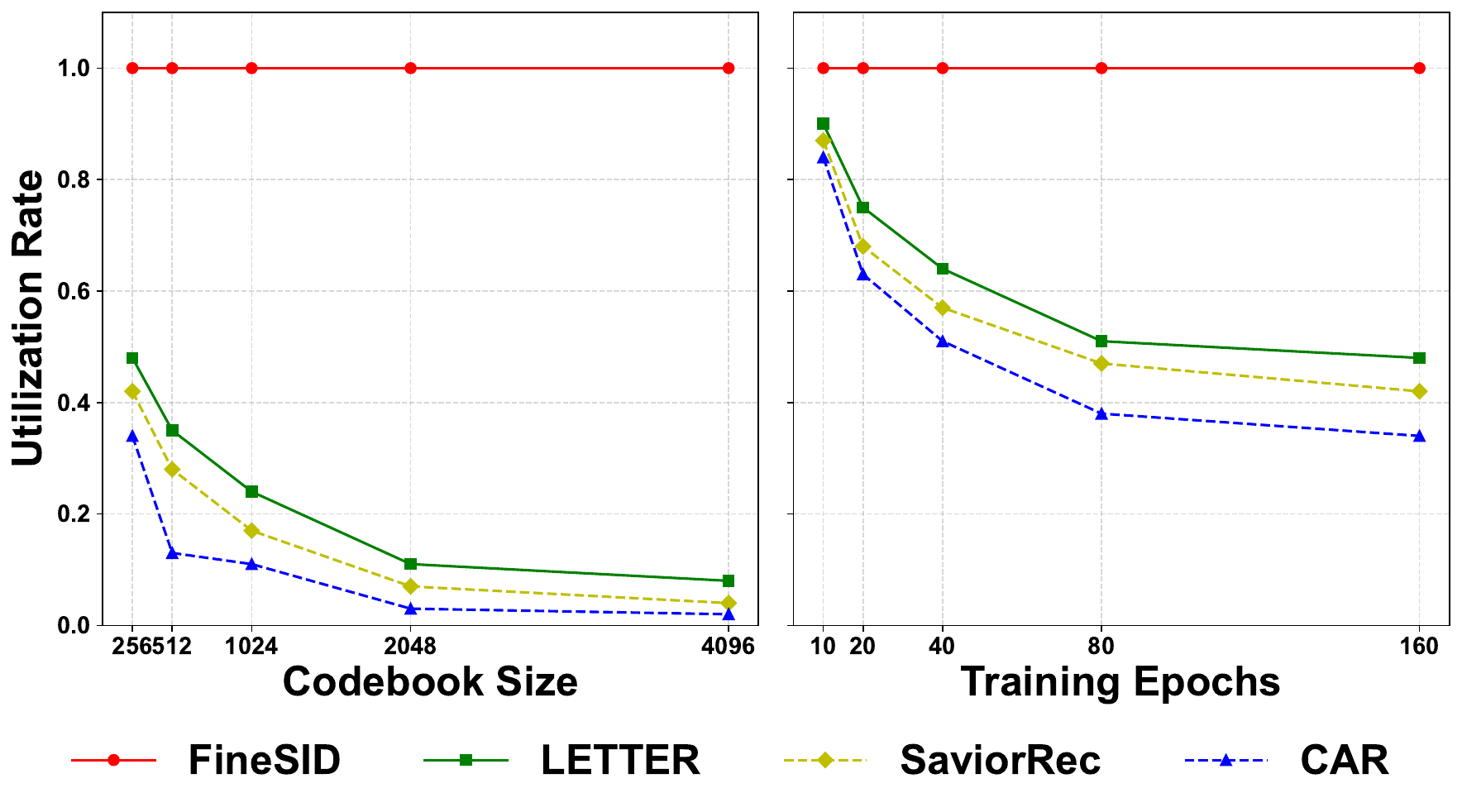}
  \caption{Codebook utilization rate under different codebook sizes and training epochs.
}
  \label{scale}
\end{figure}

\subsubsection{Generalizability Evaluation.}
To evaluate the generalizability of FineSID, we assess its recommendation performance on users unseen during training. Specifically, we construct a training set by excluding the interactions of a subset of users and form a corresponding test set containing both seen and unseen users. For the Instrument and Scientific datasets, we designate the 5\% of users with the least interaction history as unseen. Recommendation performance is then measured separately for seen and unseen users. As shown in Figure~\ref{group}, FineSID consistently outperforms LETTER, SaviorRec and CAR across both user groups, demonstrating its robust capability to capture user preferences through the effective alignment between semantic and collaborative representations.

\begin{figure}[htb]
  \centering
  \includegraphics[width=0.45\textwidth]{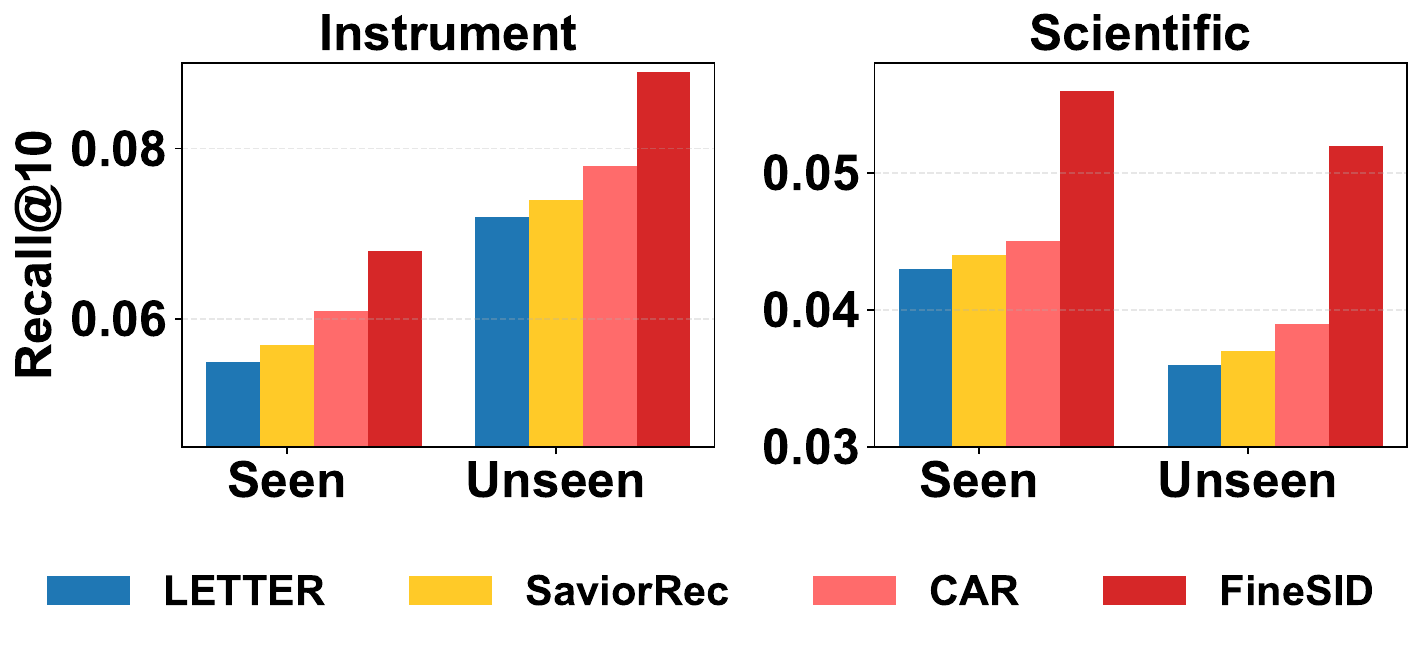}
  \caption{Comparison between Seen and Unseen Users.
}
  \label{group}
\end{figure}

\subsubsection{Optimization Trajectory Analysis.}
To further illustrate the optimization dynamics in generative recommendation, we conduct a two-dimensional toy experiment and visualize the trajectories in Figure~\ref{traj}. The comparison highlights a key distinction: conventional Top-1-optimization-based GR exhibits sparse and localized updates, where only a limited subset of identifiers receives meaningful gradients, leaving many SID effectively excluded from collaborative refinement. This behavior mirrors the practical limitation in recommendation, where popular items dominate the learning process while long-tail items remain under-optimized. In contrast, FineSID achieves coordinated and comprehensive updates across the entire codebook. By jointly leveraging GLQ and QSCM, FineSID ensures that all identifiers—representing both head and tail items—actively participate in the optimization process and evolve toward task-aligned item representations $z$. This trajectory-level evidence substantiates our claim that FineSID not only mitigates the inefficiency of isolated updates but also enhances codebook utilization, thereby improving diversity, robustness, and accuracy in recommendation.

\begin{figure}[tb]
  \centering
  \includegraphics[width=0.48\textwidth]{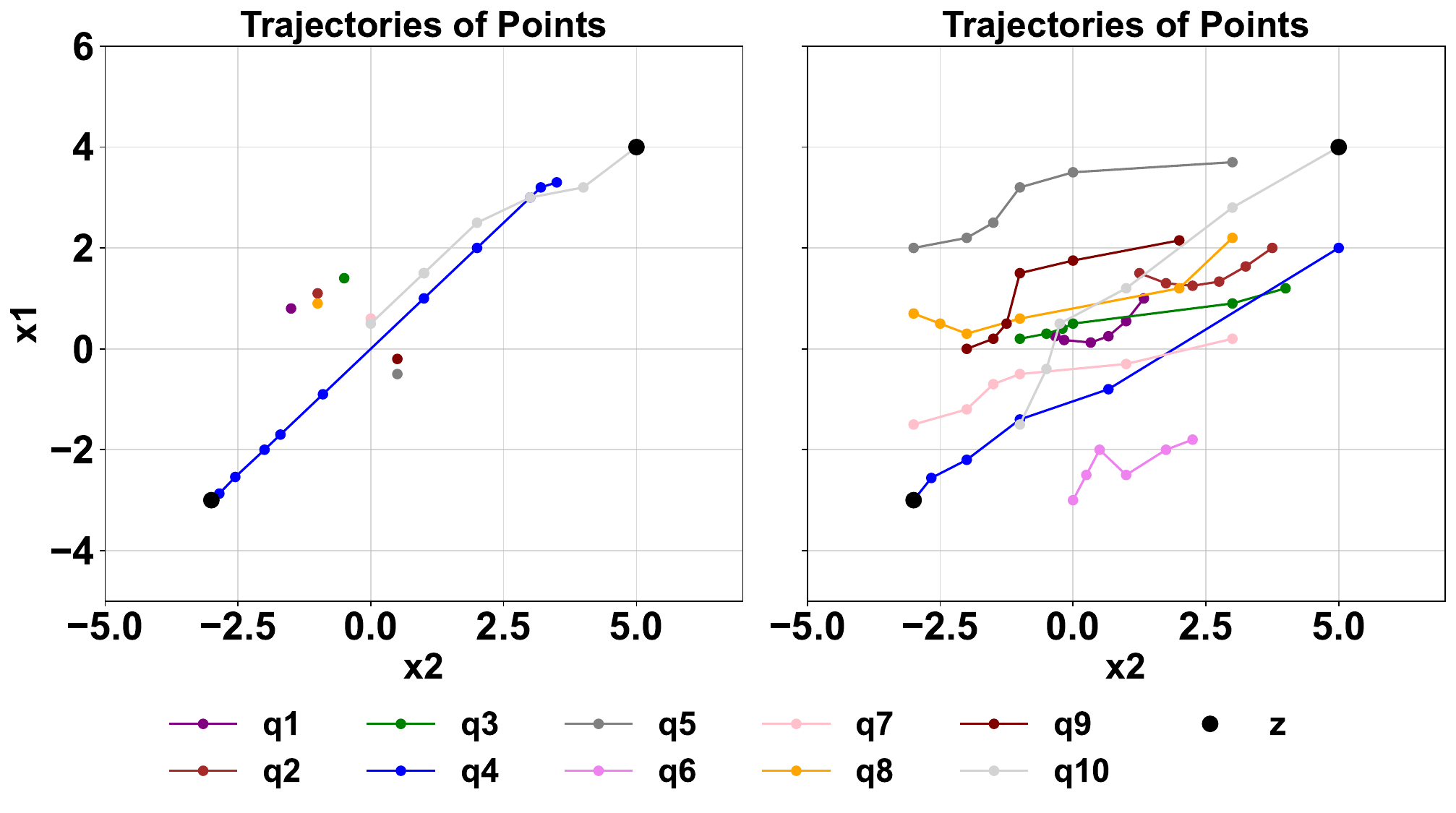}
  \caption{ (left) The optimization trajectory of existing Top-1 optimization GR. Only a small fraction
of points are updated while others remain inactive. (right) The optimization trajectory of FineSID. All the points are updated towards
targets $z$.
}
  \label{traj}
\end{figure}

\section{Conclusions}

In this work, we identify the \emph{Top-1 hard assignment} as a fundamental bottleneck in Semantic ID (SID) learning for generative recommendation, which heuristic workarounds fail to resolve without inducing semantic distortion. To overcome this, we propose FineSID, a unified quantization framework that enables global gradient propagation across the codebook while preserving strict discreteness for autoregressive generation. By integrating Global–Local Quantization (GLQ) and a Quantization Semantic Consistency Module (QSCM), FineSID achieves balanced code usage and high semantic fidelity without artificial reassignments. Extensive experiments demonstrate that FineSID consistently outperforms existing methods, particularly in long-tailed and large-scale scenarios. Future directions include extending FineSID to cross-domain settings and integrating natural language instructions for more controllable recommendation.

\bibliography{aaai2027}

\section{More Experiments}
\begin{table*}[htbp]
  \centering
  \caption{Overall performance of Qwen-1.5B on the Toys and Beauty datasets. 
  The best results are highlighted in bold and the second-best are underlined. 
  Inf. Time denotes the total inference time across all test users on a single NVIDIA RTX A5000 GPU.}
  \label{tab:qwen1.5b_toys_beauty}
  \resizebox{\linewidth}{!}{%
    \begin{tabular}{l l c c c c c c c c c c c c c}
      \toprule
      \multirow{2}{*}{Dataset} & \multirow{2}{*}{Method} 
      & \multicolumn{4}{c}{All} & \multicolumn{4}{c}{Warm} & \multicolumn{4}{c}{Cold} 
      & \multirow{2}{*}{\makecell{Inf. Time(s)\\All Users}} \\
      \cline{3-14}
      & & R@5 & R@10 & N@5 & N@10 & R@5 & R@10 & N@5 & N@10 & R@5 & R@10 & N@5 & N@10 \\
      \midrule
      \multirow{10}{*}{Toys}
      & DreamRec    & 0.0006 & 0.0013 & 0.0005 & 0.0008 & 0.0008 & 0.0019 & 0.0007 & 0.0012 & 0.0076 & 0.0137 & 0.0052 & 0.0074 & 1,093 \\
      & E4SRec      & 0.0065 & 0.0108 & 0.0056 & 0.0072 & 0.0089 & 0.0144 & 0.0075 & 0.0096 & 0.0084 & 0.0235 & 0.0055 & 0.0111 & 905 \\
      & BIGRec      & 0.0009 & 0.0016 & 0.0009 & 0.0012 & 0.0011 & 0.0013 & 0.0010 & 0.0011 & 0.0194 & 0.0311 & 0.0147 & 0.0191 & 43,304 \\
      & IDGenRec    & 0.0030 & 0.0053 & 0.0022 & 0.0031 & 0.0043 & 0.0086 & 0.0032 & 0.0048 & 0.0189 & 0.0364 & 0.0161 & 0.0224 & 30,720 \\
      & CID         & 0.0027 & 0.0047 & 0.0025 & 0.0033 & 0.0055 & 0.0084 & 0.0044 & 0.0056 & 0.0055 & 0.0156 & 0.0044 & 0.0081 & 27,248 \\
      & SemID       & 0.0024 & 0.0042 & 0.0018 & 0.0024 & 0.0034 & 0.0055 & 0.0026 & 0.0034 & 0.0140 & 0.0275 & 0.0095 & 0.0143 & 32,288 \\
      & TIGER       &0.0068 & 0.0117 & 0.0054 & 0.0072 & 0.0094 & 0.0159 & 0.0070 & 0.0095 & 0.0384 & 0.0715 & 0.0291 & 0.0408 & 13,800 \\
      & LETTER      & 0.0057 & 0.0093 & 0.0050 & 0.0064 & 0.0080 & 0.0126 & 0.0066 & 0.0085 & 0.0217 & 0.0416 & 0.0170 & 0.0239 & 13,800 \\
      & SETRec      & 0.0116 & 0.0188 & 0.0095 & 0.0120 & 0.0144 & 0.0236 & 0.0118 & 0.0151 & 0.0531 & 0.0883 & 0.0382 & 0.0507 & 926 \\
      & ETEGRec      & \underline{0.0117} & \underline{0.0191} & \underline{0.0097} & \underline{0.0125} & \underline{0.0145} & \underline{0.0239} & \underline{0.0120} & \underline{0.0153} & \underline{0.0534} & \underline{0.0884} & \underline{0.0383} & \underline{0.0509} & 1428 \\
      & EAGLE      & \textbf{0.0153*} & \textbf{0.0251*} & \textbf{0.0129*} & \textbf{0.0167*} & \textbf{0.0189*} & \textbf{0.0264*} & \textbf{0.0141*} & \textbf{0.0185*} & \textbf{0.0594*} & \textbf{0.0954*} & \textbf{0.0438*} & \textbf{0.0551*} & 979 \\
      \midrule
      \multirow{11}{*}{Beauty}
      & DreamRec    & 0.0007 & 0.0009 & 0.0005 & 0.0005 & 0.0010 & 0.0011 & 0.0007 & 0.0007 & 0.0090 & 0.0167 & 0.0075 & 0.0103 & 1,326 \\
      & E4SRec      & 0.0067 & 0.0109 & 0.0056 & 0.0072 & 0.0088 & 0.0146 & 0.0072 & 0.0094 & 0.0017 & 0.0071 & 0.0010 & 0.0029 & 910 \\
      & BIGRec      & 0.0006 & 0.0010 & 0.0006 & 0.0007 & 0.0010 & 0.0010 & 0.0008 & 0.0008 & 0.0141 & 0.0246 & 0.0094 & 0.0135 & 29,500 \\
      & IDGenRec    & 0.0042 & 0.007 & 0.0030 & 0.0043 & 0.0045 & 0.0104 & 0.0033 & 0.0054 & 0.0254 & 0.0471 & 0.0207 & 0.0292 & 35,040 \\
      & CID         & 0.0046 & 0.0077 & 0.0040 & 0.0052 & 0.0059 &0.0107 & 0.0051 & 0.0068 & 0.0075 & 0.0155 & 0.0071 & 0.0096 & 27,792 \\
      & SemID       & 0.0030 & 0.0045 & 0.0027 & 0.0033 & 0.0050 & 0.0076 & 0.0042 & 0.0052 & 0.0159 & 0.0227 & 0.0116 & 0.0159 & 45,160 \\
      & TIGER       & 0.0041 & 0.0065 & 0.0032 & 0.0041 & 0.0054 & 0.0085 & 0.0042 & 0.0054 & 0.0083 & 0.0167 & 0.0064 & 0.0091 & 12,600 \\
      & LETTER      & 0.0040 & 0.0069 & 0.0031 & 0.0042 & 0.0051 & 0.0088 & 0.0039 & 0.0054 & 0.0043 & 0.0129 & 0.0043 & 0.0071 & 12,600 \\
      & SETRec     & 0.0104 & 0.0167 & 0.0085 & 0.0108 & 0.0140 & 0.0221 & 0.0109 & 0.0141 & 0.0477 & 0.0748 & 0.0370 & 0.0464 & 1,050 \\
      & ETEGRec     & \underline{0.0109} & \underline{0.0171} & \underline{0.0088} & \underline{0.0111} & \underline{0.0142} & \underline{0.0226} & \underline{0.0112} & \underline{0.0144} & \underline{0.0483} & \underline{0.0749} & \underline{0.0374} & \underline{0.0465} & 1,398 \\
      & EAGLE       & \textbf{0.0128*} & \textbf{0.0193*} & \textbf{0.0119*} & \textbf{0.0142*} & \textbf{0.0167*} & \textbf{0.0252*} & \textbf{0.0182*} & \textbf{0.0174*} & \textbf{0.0504*} & \textbf{0.0772*} & \textbf{0.0391*} & \textbf{0.0488*} & 1,196 \\
     \bottomrule
    \end{tabular}
  }
\end{table*}
\begin{table}[htbp]
  \centering
  \caption{Performance comparison between FineSID and competitive baselines with different LLM sizes on Qwen.}
  \label{tab:qwen_scale}
    \resizebox{0.48\textwidth}{!}{%
  \begin{tabular}{c|l|cc|cc|cc}
    \hline
    & & \multicolumn{2}{c|}{All} & \multicolumn{2}{c|}{Warm} & \multicolumn{2}{c}{Cold} \\
    \cline{3-8}
    \textbf{LLM Size} & \textbf{Model} & R@10 & N@10 & R@10 & N@10 & R@10 & N@10 \\
    \hline
    \multirow{5}{*}{1.5B} & LETTER & 0.0093 & 0.0064 & 0.0126 & 0.0085 & 0.0416 & 0.0239 \\
    & E4SRec & 0.0108 & 0.0072 & 0.0144 & 0.0096 & 0.0235 & 0.0111 \\
    & SETRec & 0.0188 & 0.0120 & 0.0236 & 0.0151 & 0.0883 & 0.0507 \\
    & ETEGRec & 0.0191 & 0.0125 & 0.0239 & 0.0153 & 0.0884 & 0.0509 \\
    & FineSID & \textbf{0.0251} & \textbf{0.0167} & \textbf{0.0264} & \textbf{0.0185} & \textbf{0.0954} & \textbf{0.0551} \\
    \hline
    \multirow{5}{*}{3B} & LETTER & 0.0109 & 0.0072 & 0.0151 & 0.0097 & 0.0471 & 0.0236 \\
    & E4SRec & 0.0096 & 0.0061 & 0.0129 & 0.0081 & 0.0218 & 0.0103 \\
    & SETRec & 0.0195 & 0.0123 & 0.0258 & 0.0159 & 0.0964 & 0.0571 \\
    & ETEGRec & 0.0198 & 0.0125 & 0.0263 & 0.0164 & 0.0969 & 0.0572 \\
    & FineSID & \textbf{0.0125} & \textbf{0.0148} & \textbf{0.0292} & \textbf{0.0193} & \textbf{0.0986} & \textbf{0.0594} \\
    \hline
    \multirow{5}{*}{7B} & LETTER & 0.0099 & 0.0061 & 0.0137 & 0.0081 & 0.0406 & 0.0216 \\
    & E4SRec & 0.0088 & 0.0057 & 0.0114 & 0.0072 & 0.0133 & 0.0065 \\
    & SETRec & 0.0194 & 0.0115 & 0.0239 & 0.0140 & 0.1016 & 0.0613 \\
     & ETEGRec & 0.0195 & 0.0118 & 0.0242 & 0.0143 & 0.1017 & 0.0616 \\
      & FineSID & \textbf{0.0223} & \textbf{0.0142} & \textbf{0.0251} & \textbf{0.0173} & \textbf{0.1031} & \textbf{0.0639} \\
    \hline
  \end{tabular}
  }
\end{table}
All experiments were conducted on A100 GPUs and repeated five times with different random seeds.
\subsection{Generalization}
To assess our method's generalization on decoder-only LLMs, we instantiate our model and all baselines on the Qwen family (Qwen-1.5B–Qwen-7B)~\citep{team2024qwen2}; the scale study is summarized in Table~\ref{tab:qwen_scale}. Detailed results for Qwen-1.5B on the \textit{Toys} and \textit{Beauty} datasets are presented in Table~\ref{tab:qwen1.5b_toys_beauty}. 

\subsection{Popularity Bias}
To assess FineSID's effectiveness in alleviating popularity bias, we partition the test set into \emph{Head} and \emph{Tail} subsets. As reported in Table~\ref{tab:tail}, FineSID consistently outperforms all baselines on both subsets. Notably, the improvement is more pronounced on the \emph{Tail} items, suggesting that FineSID is particularly effective at mitigating popularity bias compared to existing neural models.

\begin{table}[ht]
\centering
\caption{Tail and Head performance comparison on ML-1M and Beauty. Each metric is NDCG@5.}
\resizebox{0.48\textwidth}{!}{%
\begin{tabular}{l|ccc|ccc}
\toprule
Dataset & \multicolumn{3}{c|}{ML-1M} & \multicolumn{3}{c}{Beauty} \\
Model   & All    & Tail   & Head   & All    & Tail   & Head   \\
\midrule
SASRec  & 0.1197 & 0.0648 & 0.1567 & 0.0319 & 0.0167 & 0.0508 \\
TIGER  & 0.1274 & 0.0752 & \underline{0.1629} & 0.0345 & 0.0172 & 0.0532 \\
LETTER& 0.1226 & 0.0713 & 0.1552 & 0.0316 & 0.0238 & 0.0511 \\
\midrule
CAR  & \underline{0.1296} & \underline{0.0791} & 0.1584 & \underline{0.0470} & \underline{0.0271} & \underline{0.0684} \\
SaviorRec& 0.0930 & 0.0537 & 0.1385 & 0.0257 & 0.0108 & 0.0435 \\
FineSID    & \textbf{0.1387} & \textbf{0.0854} & \textbf{0.1674} & \textbf{0.0547} & \textbf{0.0325} & \textbf{0.0841} \\
\bottomrule
\end{tabular}
}
\label{tab:tail}
\end{table}

\subsubsection{Average Norm Analysis of SID}
To characterize the distribution of code embeddings, we employ the embedding norm as a proxy for information capacity, following prior studies~\citep{oyama2022norm,kurita2023contrastive} which have established that larger norms correspond to richer semantic representations. We first compute the average norm across the hierarchical codebooks of the RQ-VAE (Figure~\ref{norm}). Specifically, for each item we derive a semantic ID sequence of length $L=3$ and the corresponding code embedding $e^l_{c_l}$ is retrieved from the l-th codebook, and its $L_2$ norm~\citep{wu20211} $||e^l_{c_l}||$ is then calculated. Aggregating across all items, we group by index position and report the average norm at each level. As shown in Figure~\ref{norm}, the results exhibit a clear hierarchical decay pattern: prefix tokens (earlier indices) attain larger average norms, indicating that semantic information is more densely concentrated in the initial positions. Moreover, we observe that existing methods diminishes semantic expressiveness across all three levels, whereas our proposed FineSID framework consistently enhances multi-level semantic representations compared to the conventional two-stage generative recommendation paradigm.
\begin{figure}[tb]
  \centering
  \includegraphics[width=0.45\textwidth]{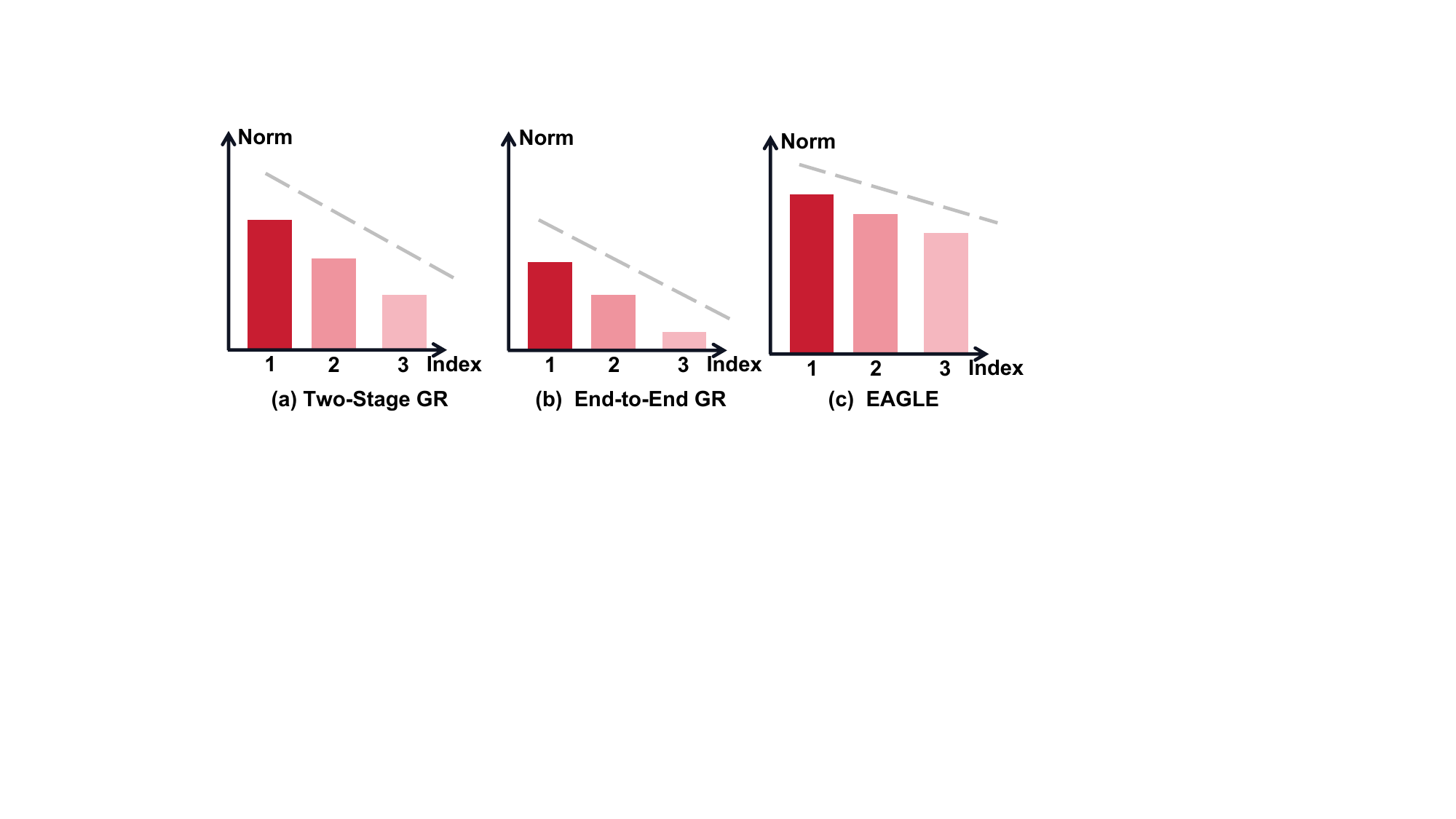}
  \caption{Average Norm Analysis of SID. The y-axis is plotted on a logarithmic scale (ranging from $10^{-3}$ to $10^{2}$).
}
  \label{norm}
\end{figure}


\subsection{Hyperparameters}
The hyperparameter $\gamma$ is selected from $\{0.4,0.5,0.6,0.7\}$, with $\gamma=0.6$ yielding the best performance. The hyperparameters $\tau$ and $\epsilon$ are chosen from $\{0.1,0.2,0.3,0.4,0.5,0.6,0.7,0.8,0.9\}$ and $\{1\times 10^{-9}, 1\times 10^{-8}, 1\times 10^{-7}, 1\times 10^{-6}\}$, respectively, with $\tau=0.2$ and $\epsilon=1\times 10^{-7}$ achieving the best results.

\section{ Discussion and Analysis}



\subsection{Theoretical Analysis}

To address the theoretical foundations of FineSID, we provide rigorous analysis of the representation capacity, collapse prevention guarantees, and semantic preservation properties of our proposed GLQ and QSCM modules.

\subsubsection{Representation Capacity Analysis}

We first establish the theoretical limits of SID in generative recommendation.

\begin{theorem}[Codebook Capacity Bound]
For a hierarchical codebook $\mathcal{E} = \{\mathcal{E}_1, \dots, \mathcal{E}_L\}$ with each level containing $K$ SIDs of dimension $d$, the expressive capacity of FineSID's SID satisfies:
\begin{equation}
I(X; \hat{Z}) \leq \min\left\{H(X), L\log K, \frac{d}{2}\log\left(1 + \frac{\mathbb{E}[\|\mathbf{z}\|^2]}{\sigma_q^2}\right)\right\},
\end{equation}
where $\sigma_q^2$ is the quantization error variance, $H(X)$ is the item entropy, and $I(X;\hat{Z})$ denotes mutual information.
\end{theorem}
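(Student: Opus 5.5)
The plan is to prove each of the three upper bounds separately and then take the minimum. First fix the probabilistic setup. $X$ is a random item drawn from the item distribution on $\mathcal{I}$, $\mathbf{z}=\mathrm{Encoder}(\mathbf{s}_X)$, and $\hat{Z}$ is the quantized latent produced by FineSID at inference. At inference only the hard GAQ indices are used, so $\hat{Z}$ is a deterministic function of the code tuple $C=(c_1,\dots,c_L)\in\{1,\dots,K\}^L$, and $C$ is in turn a deterministic function of $X$. This gives the Markov chain $X\to\mathbf{z}\to C\to\hat{Z}$, and the first two bounds follow from it with little work.

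\emph{Bound 1.} Since $\hat{Z}$ is a function of $X$, $I(X;\hat{Z})=H(\hat{Z})-H(\hat{Z}\mid X)=H(\hat{Z})$. On the other side, $I(X;\hat{Z})\le H(X)$ holds for discrete $X$ in any case.

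\emph{Bound 2.} By data processing along the chain, $I(X;\hat{Z})\le H(\hat{Z})\le H(C)\le \log|\{1,\dots,K\}^L| = L\log K$. The point to stress is that the soft LRQ assignments affect only the backward pass through the STE. The forward output is the one-hot $\text{Ind}_{\mathrm{GAQ}}$, so the support of $\hat{Z}$ really is at most $K^L$ points.

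\emph{Bound 3.} This is the Gaussian-channel term and the main obstacle, because $\hat{Z}$ is discrete and the formula is a channel-capacity expression. The approach is to model the quantizer as an additive-noise channel, $\hat{Z}=\mathbf{z}+\mathbf{q}$, with quantization error $\mathbf{q}=\hat{Z}-\mathbf{z}$ of per-coordinate variance $\sigma_q^2$. This is the quantity controlled by the commitment and QSCM terms. The argument needs $I(X;\hat{Z})\le I(\mathbf{z};\mathbf{z}+\mathbf{q})$, which holds by data processing. It then needs the capacity of an additive-noise channel under the power constraint $\mathbb{E}\|\mathbf{z}\|^2\le P$, namely $I\le \tfrac d2\log(1+P/(d\sigma_q^2))$ per dimension-normalized power. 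Getting this inequality requires $\mathbf{q}$ to be independent of $\mathbf{z}$, or at least uncorrelated, which we assume via the standard dithered or high-resolution quantization model. One then uses the worst-case-noise property: among noises of fixed covariance, Gaussian noise minimizes mutual information with a Gaussian input, and Gaussian input maximizes $h(\mathbf{z}+\mathbf{q})$. The result is $I\le h(\mathbf{z}+\mathbf{q})-h(\mathbf{q})\le \tfrac d2\log\bigl(2\pi e(\mathbb{E}\|\mathbf{z}\|^2/d+\sigma_q^2)\bigr)-\tfrac d2\log(2\pi e\,\sigma_q^2)$.

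Two details remain in Bound 3. First, the normalization: reading $\sigma_q^2$ as the total error variance scaled by $d$ gives the stated form, and I would fix this convention explicitly in the proof. Second, $h(\mathbf{q})$ may be $-\infty$ for purely deterministic quantization. I would handle this by adding an infinitesimal dither, or by reading the bound as stated under the additive-noise model. I would flag this modeling assumption clearly, since without it the third bound does not hold in general for a deterministic quantizer.

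Combining the three bounds gives the minimum.
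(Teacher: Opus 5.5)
Your Bounds 1 and 2 are the paper's own argument: data processing gives $H(X)$, and the $K^L$ cardinality of the code tuple gives $L\log K$. Your remark that the forward pass uses only the one-hot $\text{Ind}_{\mathrm{GAQ}}$ is a justification the paper leaves implicit.

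The gap is in Bound 3, at the step $-h(\mathbf q)\le-\tfrac d2\log(2\pi e\,\sigma_q^2)$. The Gaussian maximizes differential entropy under a variance constraint. So every non-Gaussian $\mathbf q$ with per-coordinate variance $\sigma_q^2$ has $h(\mathbf q)<\tfrac d2\log(2\pi e\,\sigma_q^2)$, and this inequality runs the wrong way. The worst-case-noise property you cite does not help either. It says Gaussian noise \emph{minimizes} mutual information: for Gaussian input, $I(\mathbf z;\mathbf z+\mathbf q)\ge I(\mathbf z;\mathbf z+\mathbf q_G)$, where $\mathbf q_G$ is Gaussian with the same covariance. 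That is a lower bound, not the upper bound you need. Mere uncorrelatedness is weaker still. Without independence, $h(\mathbf q\mid\mathbf z)\le h(\mathbf q)$ makes even your first inequality $I\le h(\mathbf z+\mathbf q)-h(\mathbf q)$ go the wrong way.

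Your fallback models do not repair this. Subtractive dither and the high-resolution model both make $\mathbf q$ uniform and independent of $\mathbf z$, and then the Gaussian formula genuinely fails. Take $d=1$, $\mathbf z$ uniform on $[-A/2,A/2]$, $\mathbf q$ uniform on $[-\Delta/2,\Delta/2]$, and $A=M\Delta$. Then $I(\mathbf z;\mathbf z+\mathbf q)=\ln M+\tfrac{1}{2M}$ nats, against $\tfrac12\ln(1+M^2)$ for the third term. At $M=10$ these are about $2.353>2.308$. In general uniform noise can exceed the Gaussian formula by up to $\tfrac d2\log\tfrac{2\pi e}{12}$, about $0.254$ bits per dimension. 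This is the same gap by which entropy-coded scalar quantization of a Gaussian source exceeds the third term at high resolution, so the deterministic case fails too. An infinitesimal dither makes matters worse, since $h(\mathbf q)\to-\infty$ sends your upper bound to $+\infty$.

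The only hypothesis that closes your chain is the one the paper uses implicitly when it invokes the Gaussian channel capacity formula: model the quantization error as $\mathbf q\sim\mathcal N(0,\sigma_q^2 I_d)$, independent of $\mathbf z$. Then $h(\mathbf q)=\tfrac d2\log(2\pi e\,\sigma_q^2)$ holds with equality, and your computation gives $\tfrac d2\log\bigl(1+\mathbb E\|\mathbf z\|^2/(d\sigma_q^2)\bigr)$. That is already at most the stated term because $d\ge 1$, so you need no normalization convention. State Bound 3 as a consequence of this AWGN surrogate model, not of independence or dithering.
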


\begin{proof}
The bound derives from three aspects: (1) Data processing inequality gives $I(X;\hat{Z}) \leq H(X)$; (2) The finite cardinality $K^L$ of hierarchical codes bounds $I(X;\hat{Z}) \leq L\log K$; (3) Treating quantization as a communication channel with noise variance $\sigma_q^2$, the Gaussian channel capacity formula provides the third term. The hierarchical structure in RQ-VAE achieves better rate-distortion tradeoff than flat quantization.
\end{proof}

\subsubsection{Collapse Prevention Guarantees}

The GLQ mechanism provides formal guarantees against representation collapse. 

\begin{theorem}[GLQ Embedding Collapse Prevention]
Under GAQ with EMA frequency tracking, the minimum SID utilization probability $p_{\min}$ satisfies:
\begin{equation}
p_{\min} \geq \frac{1-\gamma}{K}\exp\left(-\frac{D_{\text{KL}}(P_{\text{data}}\|U)}{\epsilon}\right),
\end{equation}
where $U$ is the uniform distribution, $\gamma$ is the EMA decay factor, and $\epsilon$ controls the softmax temperature in LRQ.
\end{theorem}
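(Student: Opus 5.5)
The plan is to read $p_{\min}$ as the long-run EMA usage frequency $\liminf_t \min_k N^{(l)}_{k(t)}$ of the least-used SID. We lower bound it by combining two effects: the EMA recursion of GAQ keeps a fraction $(1-\gamma)$ of each fresh batch frequency, and the softmax in LRQ with temperature $\epsilon$ gives every codeword positive mass. Throughout we treat the per-step selection frequency $n^{(l)}_{k(t)}/B$ as, in expectation, the LRQ soft-assignment mass $\mathbb{E}_{\mathbf{z}\sim P_{\text{data}}}[\text{Ind}^{(l)}_{\mathrm{LRQ}(t)}]_k$. This is the idealization the STE formulation of LRQ makes natural, since the forward one-hot and backward soft vector agree in expectation under sampling from the softmax. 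We state it explicitly as a modelling assumption.

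First, unroll the GAQ recursion:
\begin{equation}
N^{(l)}_{k(t)} = \gamma^t N^{(l)}_{k(0)} + (1-\gamma)\sum_{s=1}^{t}\gamma^{t-s}\frac{n^{(l)}_{k(s)}}{B}.
\end{equation}
Keeping only the most recent term gives $N^{(l)}_{k(t)} \ge (1-\gamma)\, n^{(l)}_{k(t)}/B$. Keeping the whole geometric sum would give the sharper stationary bound $\min_s n_{k(s)}/B$; we deliberately take the cruder version, because it produces exactly the $(1-\gamma)$ prefactor in the statement. So it suffices to show that the expected per-step usage of every codeword is at least $\frac{1}{K}\exp(-D_{\text{KL}}(P_{\text{data}}\|U)/\epsilon)$.

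Second, and this is the step I expect to be the main obstacle, we lower bound the softmax mass with $\text{logits}_k = -\|\mathbf{r}-\mathbf{e}_k\|^2/\epsilon$:
\begin{equation}
\text{softmax}_k = \frac{1}{K}\exp\Bigl(-\tfrac{1}{\epsilon}\bigl(\|\mathbf{r}-\mathbf{e}_k\|^2 - \mathrm{LSE}\bigr)\Bigr),
\end{equation}
where $\mathrm{LSE}$ is $\epsilon$ times the log-mean-exp of the negative logits. Averaging over $\mathbf{z}\sim P_{\text{data}}$ and applying Jensen's inequality to the convex function $\exp$ gives
\begin{equation}
\mathbb{E}[\text{softmax}_k] \ge \frac{1}{K}\exp\Bigl(-\tfrac{1}{\epsilon}\,\mathbb{E}\bigl[\|\mathbf{r}-\mathbf{e}_k\|^2-\mathrm{LSE}\bigr]\Bigr).
\end{equation}
The remaining task is to bound the expected excess distortion of codeword $k$ over the soft-min by $D_{\text{KL}}(P_{\text{data}}\|U)$. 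I would try to do this by viewing the codewords as a discretization of the data space. GAQ's EMA update pulls each $\mathbf{e}_k$ toward the mean of its assigned residuals, and QSCM keeps $\mathbf{z}$ near $\hat{\mathbf{z}}$. Under these two effects, the induced cell probabilities $q_k$ relate the excess log-likelihood of cell $k$ to $\log(q_k K)$. Summing over cells then gives a Gibbs/Donsker--Varadhan variational bound: the expected excess energy is at most the KL divergence between the data-induced cell distribution and uniform. That divergence is at most $D_{\text{KL}}(P_{\text{data}}\|U)$ by the data processing inequality, in the same spirit as part (1) of the proof of the Codebook Capacity Bound above.

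This variational step really requires an assumption on the codebook, for instance that the codebook is at a GAQ fixed point with unit-scaled distortions, in order to identify energies with log-densities. I would state that assumption explicitly. Combining the two steps and taking the minimum over $k$ yields the claimed bound on $p_{\min}$.
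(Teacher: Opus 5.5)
Your EMA step (unrolling the recursion and keeping the last term to get the factor $1-\gamma$) is fine and mirrors the paper's use of the recursion, and your Jensen step is valid. The argument breaks at the step you flagged, and an added assumption will not fix it, because the inequality you need is false in general. That inequality is $\mathbb{E}\bigl[\|\mathbf{r}-\mathbf{e}_k\|^2 - S\bigr] \le D_{\text{KL}}(P_{\text{data}}\|U)$ for every $k$, where $S=-\epsilon\log\frac{1}{K}\sum_i e^{-\|\mathbf{r}-\mathbf{e}_i\|^2/\epsilon}$ is the soft-min. (As you describe $\mathrm{LSE}$, it would be a soft-max, which breaks your displayed identity.)

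\begin{itemize}
\item Its left side depends on the codebook and grows without bound as $\mathbf{e}_k$ is moved away from the support of $P_{\text{data}}$. The right side is fixed by the data.
\item Even when $P_{\text{data}}=U$, so that the right side is $0$, concavity of $\log$ gives $S\le\frac{1}{K}\sum_i\|\mathbf{r}-\mathbf{e}_i\|^2$ pointwise. Hence the average over $k$ of the left side is nonnegative, and it vanishes only if all codewords are equidistant from almost every $\mathbf{r}$. So some $k$ violates the inequality.
\item Indeed, at $D_{\text{KL}}=0$ your reduced target $\mathbb{E}[\mathrm{softmax}_k]\ge 1/K$ for all $k$ would force exactly uniform soft usage.
\item A GAQ fixed-point assumption does not rescue this. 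A codeword whose cell receives no residuals gets no GAQ update (there is nothing to average), so far-away dead codewords are themselves fixed points. Excluding them means assuming the non-collapse you want to prove.
\item The data-processing step also needs the cell map to push $U$ forward to the uniform law on the $K$ cells, i.e.\ equal $U$-mass cells. Voronoi cells of a learned codebook do not have this in general.
\end{itemize}

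Your modelling assumption hides the same problem. In FineSID the forward index $\text{Ind}_{\text{GAQ}}$ is the deterministic Top-1 argmin, not a sample from the LRQ softmax; the STE only reroutes the backward pass. So $n_k/B$ counts nearest-neighbour hits and can be identically $0$, in which case $N_k^{(t)}=\gamma^t N_k^{(0)}\to 0$. Replacing it by the soft mass makes every codeword's usage positive by construction. What remains is only the codebook-dependent bound $\frac{1}{K}\exp\bigl(-\mathbb{E}[\|\mathbf{r}-\mathbf{e}_k\|^2-S]/\epsilon\bigr)$ that your Jensen step delivers.

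Because the claimed right-hand side does not depend on the codebook, a valid argument would have to analyse the training dynamics and show that configurations with a starved codeword cannot persist. The paper's proof does not supply this either. It only observes that the LRQ gradient on $\mathbf{e}_k$ is a softmax-weighted, hence nonzero, sum, and concludes qualitatively that no SID is dead. But that weight is at most $e^{-\Delta/\tau}$, with $\Delta$ the gap between $\|\mathbf{r}-\mathbf{e}_k\|^2$ and the nearest codeword's squared distance. A nonzero but exponentially small gradient does not guarantee that $\mathbf{e}_k$ ever becomes the nearest codeword for any residual. Nothing in that argument produces the factor $\exp(-D_{\text{KL}}/\epsilon)$.

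Finally, even granting per-codeword bounds on $\mathbb{E}[N_k]$, these do not bound $\mathbb{E}[\min_k N_k]$. That minimum is what $p_{\min}$ means under your own reading.
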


\begin{proof}
The EMA update in GAQ ensures that for any SID $\mathbf{e}_k$, its usage frequency $N_k$ follows:
\begin{equation}
N_k^{(t)} \geq (1-\gamma)\frac{n_k}{B} + \gamma N_k^{(t-1)}.
\end{equation}
For persistently underutilized SIDs, the soft assignment in LRQ guarantees nonzero gradient flow:
\begin{equation}
\frac{\partial \mathcal{L}}{\partial \mathbf{e}_k} = \sum_{j=1}^B \frac{\partial \mathcal{L}}{\partial \hat{\mathbf{z}}_j} \cdot \frac{\exp(-\|\mathbf{r}_j - \mathbf{e}_k\|^2/\tau)}{\sum_i \exp(-\|\mathbf{r}_j - \mathbf{e}_i\|^2/\tau)}.
\end{equation}
Even when $\|\mathbf{r}_j - \mathbf{e}_k\|$ is large, the gradient remains nonzero, preventing dead SIDs.
\end{proof}

\begin{corollary}[Token Collapse Immunity]
For temperature $\tau = \Theta(1/\log K)$, GLQ achieves near-perfect codebook utilization:
\begin{equation}
\mathbb{E}\left[\frac{\# \text{active SIDs}}{K}\right] \geq 1 - \mathcal{O}\left(\frac{1}{K}\right).
\end{equation}
\end{corollary}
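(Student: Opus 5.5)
We derive the corollary from the preceding GLQ Embedding Collapse Prevention theorem: a uniform lower bound on each SID's selection probability, combined with linearity of expectation, bounds the expected fraction of dead SIDs. Since the theorem's $\epsilon$ controls the LRQ softmax temperature, we identify it with $\tau$ up to constants. We then take ``active'' to mean that the SID is selected by the GAQ hard assignment for at least one of the $N$ items in the catalogue, or at least once within the EMA horizon, whose effective sample count is of order $B/(1-\gamma)$. With $\mathbf{1}_k$ the indicator that SID $k$ is active,
\begin{equation*}
\mathbb{E}\left[\frac{\#\text{active SIDs}}{K}\right] = \frac{1}{K}\sum_{k=1}^K \Pr[\mathbf{1}_k=1] \geq 1 - \max_k \Pr[\text{SID } k \text{ never selected}].
\end{equation*}
Assuming approximately independent draws, the never-selected probability is at most $(1-p_{\min})^{M} \leq \exp(-M p_{\min})$, where $M$ is the number of effective assignments.

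Next we insert the theorem's bound with $\epsilon=\tau=c/\log K$:
\begin{equation*}
p_{\min} \geq \frac{1-\gamma}{K}\exp\left(-\frac{D_{\text{KL}}(P_{\text{data}}\|U)\log K}{c}\right) = \frac{1-\gamma}{K}\,K^{-D_{\text{KL}}/c}.
\end{equation*}
The KL divergence from uniform over $K$ codes is at most $\log K$. That cap alone is too weak, so we need a regime assumption: either $D_{\text{KL}}(P_{\text{data}}\|U) = O(1)$, meaning the code-usage distribution is only boundedly non-uniform (which GAQ's inverse-frequency update rates are meant to enforce), or $c$ is chosen of order $D_{\text{KL}}$. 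In either case $K^{-D_{\text{KL}}/c}$ is a constant $\kappa>0$, so $p_{\min} \geq \kappa(1-\gamma)/K$. Then
\begin{equation*}
\exp(-Mp_{\min}) \leq \exp\big(-\kappa(1-\gamma)M/K\big),
\end{equation*}
and this is $O(1/K)$ as soon as $M \geq K\log K/(\kappa(1-\gamma))$. A catalogue of about $25$K items with $K=256$ satisfies this comfortably. Summing over $k$ gives the claimed bound $1-\mathcal{O}(1/K)$.

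\textbf{Main obstacle.} The difficulty lies in the previous step, not in the averaging: justifying that $D_{\text{KL}}(P_{\text{data}}\|U)/\tau$ stays bounded when $\tau=\Theta(1/\log K)$. I would first try to show that GAQ's EMA update with $\alpha_k \propto 1/N_k$ drives the usage distribution toward a fixed point whose ratio $\max_k N_k/\min_k N_k$ is bounded. That would give $D_{\text{KL}}=O(1)$ directly. If that fails, I would state the corollary conditionally on this bounded-imbalance assumption.

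A secondary issue is that a nonzero LRQ gradient alone does not imply hard selection. The argument must therefore rely on $p_{\min}$ being a probability of hard assignment, as the preceding theorem asserts, rather than on the gradient identity used in that theorem's proof.
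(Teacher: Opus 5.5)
Your proof has a genuine gap in the substitution step, so the corollary does not follow from the theorem under the assumption you state. For context, the paper gives no derivation of this corollary at all. It is stated right after the collapse-prevention theorem, whose proof only argues that the LRQ softmax gives every codeword a nonzero gradient; you rightly point out that this does not imply hard selection. So there is no paper route to compare against. Your reduction, linearity of expectation plus a coupon-collector bound driven by $p_{\min}$, is the natural one, and that part is fine.

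\textbf{The gap.} With $\epsilon=\tau=c/\log K$ you correctly obtain $\exp(-D_{\text{KL}}/\tau)=K^{-D_{\text{KL}}/c}$. But this is not a constant when $D_{\text{KL}}=O(1)$ and $c=\Theta(1)$: for any fixed $D_{\text{KL}}>0$ it decays polynomially in $K$. All you actually get is $p_{\min}\ge(1-\gamma)K^{-1-D_{\text{KL}}/c}$. Your second regime fails the same way, since $c\asymp D_{\text{KL}}$ gives $K^{-\Theta(1)}$. A $K$-independent $\kappa$ requires $D_{\text{KL}}\log K/c=O(1)$, i.e.\ $D_{\text{KL}}(P_{\text{data}}\|U)=O(1/\log K)$. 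That means code usage must be uniform up to root-mean-square relative deviations of order $1/\sqrt{\log K}$. Your ``main obstacle'' paragraph states the right condition ($D_{\text{KL}}/\tau$ bounded) but then equates it with $D_{\text{KL}}=O(1)$. A bounded ratio $\max_k N_k/\min_k N_k\le R$ only yields $D_{\text{KL}}\le\log R$, so the GAQ fixed-point plan you sketch would not close the argument.

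\textbf{Consequences.} If you keep only $D_{\text{KL}}=O(1)$, the argument still goes through, but it needs $M\ge K^{1+D_{\text{KL}}/c}\log K/(1-\gamma)$ effective assignments rather than $K\log K/(\kappa(1-\gamma))$. For $K=256$, $\gamma=0.6$ and $D_{\text{KL}}/c=1$ that is about $9\times10^5$, so your remark that a 25K-item catalogue suffices comfortably no longer holds. Note also that the theorem's lower bound increases with the temperature, so the hypothesis $\tau=\Theta(1/\log K)$ works against you. With $\tau=\Omega(\log K)$, the trivial cap $D_{\text{KL}}\le\log K$ would give $\exp(-D_{\text{KL}}/\tau)\ge e^{-O(1)}$ unconditionally, and your proof would go through as written. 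As the corollary is stated, it follows from the theorem only under the near-uniformity assumption $D_{\text{KL}}=O(1/\log K)$ together with your independence idealization. You should state it conditionally in that form.
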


\subsubsection{Quantization Error Analysis}

We derive explicit bounds on the quantization error introduced by our hierarchical discretization.

\begin{theorem}[Quantization Error Bound]
The reconstruction error of FineSID's hierarchical quantization satisfies:
\begin{equation}
\mathbb{E}[\|\mathbf{z} - \hat{\mathbf{z}}\|^2] \leq \sum_{l=1}^L \epsilon_l + \mathcal{O}\left(\frac{\log K}{d}\right),
\end{equation}
where $\epsilon_l$ is the quantization error at level $l$, bounded by the covering radius of codebook $\mathcal{E}_l$.
\end{theorem}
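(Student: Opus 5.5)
Write $\mathbf{r}^{(0)}=\mathbf{z}$ and $\mathbf{r}^{(l)}=\mathbf{r}^{(l-1)}-\mathbf{e}^{(l)}_{c_l}$. The approach is to use the residual recursion of the RQ-VAE tokenizer. Since $\hat{\mathbf{z}}=\sum_{l=1}^L \mathbf{e}^{(l)}_{c_l}$, the sum telescopes to the exact identity $\mathbf{z}-\hat{\mathbf{z}}=\mathbf{r}^{(L)}$. The total reconstruction error is therefore the squared norm of the last residual, and the task reduces to controlling how residual norms evolve across levels.

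The key steps are as follows.

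\begin{enumerate}
\item Define $\epsilon_l := \mathbb{E}\big[\min_i \|\mathbf{r}^{(l-1)}-\mathbf{e}^{(l)}_i\|^2\big]$, the per-level quantization error. It is bounded by the squared covering radius of $\mathcal{E}_l$ over the support of the level-$l$ residuals.
\item Because the forward pass uses the hard Top-1 index $\text{Ind}_{\text{GAQ}}$, the STE/LRQ construction only changes gradients, not the forward value. Hence $\mathbb{E}\|\mathbf{r}^{(l)}\|^2=\epsilon_l$ exactly for the hard path.
\item A direct bound then follows: $\mathbb{E}\|\mathbf{z}-\hat{\mathbf{z}}\|^2=\epsilon_L\le\sum_{l}\epsilon_l$, using only $\epsilon_l\ge 0$.
\item To present the bound in the stated additive form, I would instead expand the error additively: $\|\mathbf{r}^{(L)}\|^2\le \|\mathbf{r}^{(l-1)}\|^2$ at each step, since the zero vector can be treated as a codeword or the nearest codeword does no worse than a norm-nonincreasing choice. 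This makes the per-level contributions $\epsilon_l$ dominate.
\end{enumerate}

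The remaining $\mathcal{O}(\log K/d)$ term should be read as the slack coming from the GAQ/EMA dynamics. The codewords at step $t$ are EMA averages, not exact conditional means of their Voronoi cells. I would bound the discrepancy $\|\mathbf{e}^{(l)}_{k(t)}-\mathbb{E}[\mathbf{r}\mid c=k]\|^2$ with a concentration argument over the $K$ cells, using a union bound that contributes $\log K$. I would then normalize per coordinate in dimension $d$, assuming residuals are bounded or sub-Gaussian with $\mathcal{O}(1/d)$ per-coordinate variance. This follows the same rate–distortion reasoning as the Codebook Capacity Bound above, where $d$-dimensional Gaussian quantization yields a distortion scaling governed by $\log K/d$.

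The main obstacle is making this last step rigorous, since it requires explicit assumptions on the residual distribution and on the EMA rates $\alpha_k$ that the paper leaves implicit. My first attempt would be to assume bounded residuals $\|\mathbf{r}\|\le R$ and fixed-point convergence of the GAQ update. Under those assumptions each codeword is a centroid up to an error that vanishes with the batch statistics, and the $\mathcal{O}(\log K/d)$ term absorbs the uniform deviation over the $K$ cells.
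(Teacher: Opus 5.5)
Your steps 1--3 already form a complete, rigorous proof, and they take a genuinely different and sounder route than the paper. The paper writes $\mathbf{z}-\hat{\mathbf{z}}=\sum_{l=1}^L(\mathbf{r}^{(l-1)}-\mathbf{e}_{c_l})$ and applies Cauchy--Schwarz to get $L\sum_{l}\|\mathbf{r}^{(l-1)}-\mathbf{e}_{c_l}\|^2$. It then asserts that GLQ makes each $\mathcal{E}_l$ an $\epsilon$-net of covering radius $\mathcal{O}(\sqrt{\log K/d})$, and this is where its $\log K/d$ term comes from. That decomposition has three problems:
\begin{itemize}
\item it is incorrect for $L>1$, because the sum equals $\sum_{l=1}^L\mathbf{r}^{(l)}$, not $\mathbf{r}^{(L)}$;
\item it introduces a factor $L$ that does not appear in the statement;
\item the $\epsilon$-net claim is never justified.
\end{itemize}
Your telescoping identity $\mathbf{z}-\hat{\mathbf{z}}=\mathbf{r}^{(L)}$, together with the observation that the STE leaves the forward pass on the hard nearest-codeword path, gives the exact equality $\mathbb{E}\|\mathbf{z}-\hat{\mathbf{z}}\|^2=\epsilon_L$. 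The stated bound therefore holds with $\epsilon_1,\dots,\epsilon_{L-1}$ and the $\mathcal{O}(\log K/d)$ term as pure nonnegative slack. It needs no assumption on residual distributions, EMA rates, or GAQ. The paper's route tries to attach a quantitative rate to the per-level errors; yours shows that the inequality as stated needs no such input.

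You should therefore stop after step 3. Step 4 is unnecessary, and it is also false in general. The nearest-codeword step is norm-nonincreasing exactly when some $\mathbf{e}\in\mathcal{E}_l$ satisfies $2\langle\mathbf{r}^{(l-1)},\mathbf{e}\rangle\ge\|\mathbf{e}\|^2$ (for example if $\mathbf{0}\in\mathcal{E}_l$), and nothing in the RQ-VAE or GAQ updates guarantees this.

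Likewise, the ``main obstacle'' you identify is not one. $\epsilon_l$ is defined relative to whatever codebook is in use at the current step, so the EMA dynamics never enter. A nonnegative additive term in an upper bound can simply be dropped rather than derived. The concentration and union-bound argument, and the reading of $\mathcal{O}(\log K/d)$ as EMA slack, are superfluous.
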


\begin{proof}
Using the residual quantization structure, the total error decomposes as:
\begin{equation}
\|\mathbf{z} - \hat{\mathbf{z}}\|^2 = \left\|\sum_{l=1}^L (\mathbf{r}^{(l-1)} - \mathbf{e}_{c_l})\right\|^2 \leq L\sum_{l=1}^L \|\mathbf{r}^{(l-1)} - \mathbf{e}_{c_l}\|^2.
\end{equation}
Each term $\|\mathbf{r}^{(l-1)} - \mathbf{e}_{c_l}\|^2$ is bounded by the covering properties of the codebook. The GLQ mechanism ensures each $\mathcal{E}_l$ forms an $\epsilon$-net with covering radius $\mathcal{O}(\sqrt{\log K/d})$.
\end{proof}

\subsubsection{Semantic Preservation Theory}

The QSCM module provides formal guarantees for semantic consistency.

\begin{theorem}[QSCM Semantic Preservation]
The double quantization alignment loss in QSCM ensures the quantization process is Lipschitz continuous:
\begin{equation}
\left|\|\mathbf{z}_i - \mathbf{z}_j\| - \|\hat{\mathbf{z}}_i - \hat{\mathbf{z}}_j\|\right| \leq \epsilon \quad \forall i,j,
\end{equation}
where $\epsilon = \mathcal{O}\left(\sqrt{\frac{L\log K}{d}}\right)$.
\end{theorem}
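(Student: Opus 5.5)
The plan rests on the reverse triangle inequality. For any items $i,j$,
\begin{equation*}
\left|\|\mathbf{z}_i - \mathbf{z}_j\| - \|\hat{\mathbf{z}}_i - \hat{\mathbf{z}}_j\|\right| \le \|(\mathbf{z}_i - \hat{\mathbf{z}}_i) - (\mathbf{z}_j - \hat{\mathbf{z}}_j)\| \le \|\mathbf{z}_i - \hat{\mathbf{z}}_i\| + \|\mathbf{z}_j - \hat{\mathbf{z}}_j\|.
\end{equation*}
So the whole statement reduces to a \emph{uniform, per-item} bound $\sup_j \|\mathbf{z}_j - \hat{\mathbf{z}}_j\| = \mathcal{O}\bigl(\sqrt{L\log K/d}\bigr)$. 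The claimed Lipschitz-type property then follows with $\epsilon$ equal to twice that bound, and the constant $2$ is absorbed into the $\mathcal{O}(\cdot)$.

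For the per-item bound I will use the residual structure directly rather than the loose inequality in the proof of the Quantization Error Bound. By construction $\mathbf{z}_j - \hat{\mathbf{z}}_j = \mathbf{r}^{(L)}_j$, and the residual norms satisfy
\begin{equation*}
\|\mathbf{r}^{(l)}_j\|^2 = \|\mathbf{r}^{(l-1)}_j - \mathbf{e}^{(l)}_{c^j_l}\|^2 .
\end{equation*}
I then invoke the covering statement used in the proof of the Quantization Error Bound: under GLQ each $\mathcal{E}_l$ is an $\epsilon$-net with squared covering radius $\mathcal{O}(\log K/d)$. The intended decomposition is
\begin{equation*}
\|\mathbf{r}^{(L)}_j\|^2 \le \sum_{l=1}^L \|\mathbf{r}^{(l-1)}_j - \mathbf{e}^{(l)}_{c^j_l}\|^2 \le L\cdot \mathcal{O}(\log K/d),
\end{equation*}
which gives the desired $\sqrt{L\log K/d}$ after taking square roots. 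The first inequality here is the crux. Each summand only measures how far the level-$l$ residual is from its own codeword, so for it to dominate $\|\mathbf{r}^{(L)}_j\|^2$ I need the per-level errors to be approximately orthogonal, i.e.\ the cross terms to be nonpositive or negligible. The naive Cauchy--Schwarz route would instead give a factor $L^2$, yielding only $L\sqrt{\log K/d}$.

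QSCM enters in two places. First, its stop-gradient pair of terms $\|\hat{\mathbf{z}}_j - \mathrm{sg}[\mathbf{z}_j]\|^2 + \|\mathrm{sg}[\hat{\mathbf{z}}_j] - \mathbf{z}_j\|^2$ is minimized exactly when each codeword is the centroid of the residuals assigned to it; the GAQ EMA update has the same fixed point. At such a centroidal configuration, the residual assigned at level $l$ has zero mean within its cell. I will use this to argue that the cross terms between levels vanish in aggregate and are controlled per item. Second, the second QSCM term keeps $\mathbf{z}_j$ inside the region covered by the codebook, which is what legitimizes applying the covering radius to every item rather than only on average.

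I expect two obstacles. The harder is passing from the expectation bound in the Quantization Error Bound to a uniform bound over all pairs $i,j$. My first attempt is to assume a deterministic covering radius for the support of the latents, i.e.\ the $\epsilon$-net property holds pointwise, which is how that earlier proof phrases it. Failing that, I would state the result with high probability, via a union bound over the finite item set, at the cost of a $\sqrt{\log|\mathcal{I}|}$ factor. The second obstacle is the near-orthogonality of successive residual errors. If the centroid argument proves too weak, I will accept the $L\sqrt{\log K/d}$ rate as a fallback and note that it matches the claim whenever $L$ is a constant, as in the experiments where $L=3$.
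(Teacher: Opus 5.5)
This is essentially the paper's argument: the reverse triangle inequality reduces the claim to a per-item bound on $\|\mathbf{z}_j-\hat{\mathbf{z}}_j\|$, which the paper writes as $\sqrt{\mathcal{L}_{\mathrm{DQA}}}$ (just a restatement, since the QSCM loss has value $2\|\mathbf{z}_j-\hat{\mathbf{z}}_j\|^2$) and then bounds via the Quantization Error Bound, so both versions rest on the same unproven covering-radius claim, which you rightly note must hold pointwise to give a statement for all $i,j$ (the union-bound fallback would not rescue this, because from an in-expectation bound alone the cost is a factor polynomial in $|\mathcal{I}|$, not $\sqrt{\log|\mathcal{I}|}$, unless you add tail assumptions). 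The step you call the crux is not actually an obstacle: since $\|\mathbf{r}^{(l-1)}_j-\mathbf{e}^{(l)}_{c^j_l}\|^2=\|\mathbf{r}^{(l)}_j\|^2$, your middle expression is $\sum_{l=1}^L\|\mathbf{r}^{(l)}_j\|^2\ge\|\mathbf{r}^{(L)}_j\|^2$ with no cross terms at all (the $L^2$ worry comes from the paper's identity $\mathbf{z}-\hat{\mathbf{z}}=\sum_l(\mathbf{r}^{(l-1)}-\mathbf{e}_{c_l})$, which is false because its right-hand side equals $\sum_l\mathbf{r}^{(l)}$), and covering at the last level alone already gives $\|\mathbf{r}^{(L)}_j\|=\mathcal{O}(\sqrt{\log K/d})$, so you can drop the centroid argument and the $L\sqrt{\log K/d}$ fallback.
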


\begin{proof}
The QSCM loss $\mathcal{L}_{\text{DQA}} = \|\hat{\mathbf{z}} - \text{sg}[\mathbf{z}]\|^2 + \|\text{sg}[\hat{\mathbf{z}}] - \mathbf{z}\|^2$ enforces bidirectional consistency. By the triangle inequality:
\begin{align}
\left|\|\mathbf{z}_i - \mathbf{z}_j\| - \|\hat{\mathbf{z}}_i - \hat{\mathbf{z}}_j\|\right| &\leq \|\mathbf{z}_i - \hat{\mathbf{z}}_i\| + \|\mathbf{z}_j - \hat{\mathbf{z}}_j\| \\
&\leq 2\sqrt{\mathcal{L}_{\text{DQA}}} = \mathcal{O}\left(\sqrt{\frac{L\log K}{d}}\right).
\end{align}
The final bound comes from substituting the quantization error bound.
\end{proof}

\begin{corollary}[Collaborative Similarity Preservation]
For any item pair $(i,j)$, their cosine similarity in embedding and SID satisfies:
\begin{equation}
\left|\text{sim}(\mathbf{z}_i, \mathbf{z}_j) - \text{sim}(\hat{\mathbf{z}}_i, \hat{\mathbf{z}}_j)\right| \leq \delta,
\end{equation}
where $\delta = \mathcal{O}\left(\sqrt{\frac{L\log K}{d}}\right)$.
\end{corollary}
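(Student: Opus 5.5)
The plan is to derive the corollary from the \emph{per-item} bound used inside the proof of the QSCM Semantic Preservation theorem, not from its pairwise statement. A bound on $\left|\|\mathbf{z}_i-\mathbf{z}_j\| - \|\hat{\mathbf{z}}_i-\hat{\mathbf{z}}_j\|\right|$ alone does not control angles: a common rotation or rescaling can preserve all distances' magnitudes while changing inner products relative to the origin. What that proof actually gives is
\begin{equation*}
\|\mathbf{z}_i - \hat{\mathbf{z}}_i\| \leq \sqrt{\mathcal{L}_{\text{DQA}}} = \mathcal{O}\left(\sqrt{\frac{L\log K}{d}}\right) \quad \text{for every item } i,
\end{equation*}
combined with the Quantization Error Bound theorem. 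I will take this as the basic input and write $\eta$ for its right-hand side.

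The argument then reduces cosine similarity to unit-vector perturbation. Write $\mathbf{u}=\mathbf{z}/\|\mathbf{z}\|$ and $\hat{\mathbf{u}}=\hat{\mathbf{z}}/\|\hat{\mathbf{z}}\|$, so that $\text{sim}(\mathbf{z}_i,\mathbf{z}_j)=\langle \mathbf{u}_i,\mathbf{u}_j\rangle$.
\begin{itemize}
\item First, the decomposition
\begin{equation*}
\langle \mathbf{u}_i,\mathbf{u}_j\rangle-\langle \hat{\mathbf{u}}_i,\hat{\mathbf{u}}_j\rangle=\langle \mathbf{u}_i-\hat{\mathbf{u}}_i,\mathbf{u}_j\rangle+\langle \hat{\mathbf{u}}_i,\mathbf{u}_j-\hat{\mathbf{u}}_j\rangle
\end{equation*}
and Cauchy--Schwarz on unit vectors give $|\Delta\text{sim}|\le \|\mathbf{u}_i-\hat{\mathbf{u}}_i\|+\|\mathbf{u}_j-\hat{\mathbf{u}}_j\|$.
\item Second, the standard normalization inequality $\left\|\frac{\mathbf{a}}{\|\mathbf{a}\|}-\frac{\mathbf{b}}{\|\mathbf{b}\|}\right\|\le \frac{2\|\mathbf{a}-\mathbf{b}\|}{\|\mathbf{a}\|}$ follows from adding and subtracting $\mathbf{b}/\|\mathbf{a}\|$ and using $\big|\|\mathbf{a}\|-\|\mathbf{b}\|\big|\le\|\mathbf{a}-\mathbf{b}\|$.
\item Combining the two steps yields $|\Delta\text{sim}|\le 2\eta\,(1/\|\mathbf{z}_i\|+1/\|\mathbf{z}_j\|)$.
\end{itemize}

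The main obstacle is that last factor: the bound requires the latent norms to be bounded away from zero, which is nowhere stated in the paper. I would make this explicit as a standing assumption that $\|\mathbf{z}\|\ge c_0>0$ on the item set. This is natural because the encoder outputs from LLaMA-style embeddings are typically (near-)normalized, and the Average Norm Analysis reports nonvanishing norms. The constant $4/c_0$ is then absorbed into the $\mathcal{O}(\cdot)$, giving $\delta=\mathcal{O}\big(\sqrt{L\log K/d}\big)$.

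If one prefers not to assume a lower bound, the fallback is to state the result for items with $\|\mathbf{z}\|\ge 2\eta$. In that regime $\|\hat{\mathbf{z}}\|\ge\|\mathbf{z}\|/2$, so the same computation goes through with the constant depending on the minimum norm. Near-zero latents are excluded there, since cosine similarity is ill-conditioned at zero anyway.
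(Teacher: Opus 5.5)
Your argument is correct under the assumptions you state, and it takes a different route from the paper. The paper gives no separate proof: the corollary sits directly after the QSCM Semantic Preservation theorem and implicitly inherits its rate from the pairwise bound $\bigl|\|\mathbf{z}_i-\mathbf{z}_j\|-\|\hat{\mathbf{z}}_i-\hat{\mathbf{z}}_j\|\bigr|\le\epsilon$. You instead start from the per-item estimate $\|\mathbf{z}_i-\hat{\mathbf{z}}_i\|\le\eta$ inside that proof. You then pass to cosines via the two-term decomposition and the normalization inequality $\bigl\|\mathbf{a}/\|\mathbf{a}\|-\mathbf{b}/\|\mathbf{b}\|\bigr\|\le 2\|\mathbf{a}-\mathbf{b}\|/\|\mathbf{a}\|$. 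Both steps check out and give $|\Delta\text{sim}|\le 4\eta/c_0$.

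What your route buys is a statement that is actually true. The paper's implicit route cannot work, because a pairwise-distance bound is blind to a common shift $\hat{\mathbf{z}}=\mathbf{z}+\mathbf{c}$. Take $\mathbf{z}_i=(1,0)$, $\mathbf{z}_j=(0,1)$, $\mathbf{c}=(10,10)$: all distances are unchanged, while the cosine jumps from $0$ to $220/221$. The corollary as written also fails without a lower bound on $\|\mathbf{z}\|$, since an item with $\|\mathbf{z}_i\|<\eta$ may have $\hat{\mathbf{z}}_i$ pointing in any direction. Your explicit constant $4/c_0$ exposes the dependence on the latent scale that the paper's $\mathcal{O}(\cdot)$ hides.

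Some repairs are still needed.

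\emph{Your motivating example is wrong.} A common rotation about the origin preserves inner products, hence cosines. A common rescaling changes distances while leaving cosines invariant. So neither separates the two notions; use the translation above.

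\emph{The per-item bound is not available as stated.} The Quantization Error Bound is given in expectation, which does not yield $\|\mathbf{z}_i-\hat{\mathbf{z}}_i\|\le\eta$ for every item; the QSCM theorem's own proof glosses over this too. For ``any item pair'' you must explicitly import the worst-case covering-radius claim from that proof.

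\emph{Your reasons for $\|\mathbf{z}\|\ge c_0$ do not apply to $\mathbf{z}$.} It is the output of the MLP encoder, not the LLaMA embedding. The Average Norm Analysis reports average norms of code embeddings $\mathbf{e}^{(l)}_{c_l}$, not a minimum of $\|\mathbf{z}\|$. Keep it as a bare standing assumption. Note that it also gives $\hat{\mathbf{z}}\neq\mathbf{0}$ when $\eta<c_0$, which your normalization step needs; for $\eta\ge c_0/2$ your bound is at least $2$ and vacuous anyway.

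\emph{The fallback regime $\|\mathbf{z}\|\ge 2\eta$ is not really weaker.} At its boundary your bound equals $2$. You recover the rate $\mathcal{O}\bigl(\sqrt{L\log K/d}\bigr)$ only if the minimum norm is a constant independent of $L$, $K$, $d$, which is the same assumption again.
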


\subsubsection{Optimization Convergence Analysis}

We analyze the convergence properties of the joint training objective.

\begin{theorem}[FineSID Convergence Rate]
The unified FineSID objective $\mathcal{L} = \mathcal{L}_{\text{rank}} + \lambda_1\mathcal{L}_{\text{Sem}} + \lambda_2\mathcal{L}_{\text{DQA}}$ converges exponentially fast:
\begin{equation}
\mathbb{E}[\mathcal{L}^{(t)} - \mathcal{L}^*] \leq C\cdot\rho^t,
\end{equation}
where $\rho < 1$ depends on the condition number $\kappa$ of the loss landscape, and $C$ depends on initialization.
\end{theorem}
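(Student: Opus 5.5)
The plan is to reduce the statement to the standard linear-convergence argument for smooth objectives that satisfy a Polyak--{\L}ojasiewicz (PL) inequality. The rate will be $\rho = 1-\mu/\beta = 1-1/\kappa$, where $\beta$ is the smoothness constant and $\mu$ the PL constant. The objective $\mathcal{L} = \mathcal{L}_{\text{rank}} + \lambda_1\mathcal{L}_{\text{Sem}} + \lambda_2\mathcal{L}_{\text{DQA}}$ is non-convex and contains the piecewise-constant Top-1 assignment. The theorem therefore cannot hold unconditionally, and I will make its implicit hypotheses explicit.

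\emph{Assumptions.} I would assume:
\begin{itemize}
\item[(A1)] $\mathcal{L}$ is $\beta$-smooth on the region visited by training. This is plausible because every term is a squared norm or a log-softmax composed with bounded MLP/Transformer maps.
\item[(A2)] $\mathcal{L}$ satisfies the PL inequality $\tfrac12\|\nabla\mathcal{L}\|^2 \ge \mu(\mathcal{L}-\mathcal{L}^*)$ on that region.
\item[(A3)] The stochastic gradient $g_t$ is unbiased for the surrogate gradient, with variance bounded by $\sigma^2$.
\end{itemize}
Under these assumptions, $\kappa=\beta/\mu$.

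\emph{Step 1: handle the hard assignment.} For fixed hard codes $\text{Ind}_{\text{GAQ}}$, the forward pass is smooth in all parameters. The LRQ straight-through term only changes the backward direction, through the softmax weights displayed in the proof of the GLQ Collapse Prevention theorem. I would show that the GAQ usage statistics $N_k$ converge to a fixed point because the EMA factor satisfies $\gamma<1$. I would also use the Quantization Error Bound to argue that, once residuals sit inside Voronoi cells with a margin larger than the per-step parameter movement, the assignments freeze after some finite time $T_0$. After $T_0$ the objective is a single smooth branch, and the behaviour before $T_0$ can be absorbed into the constant $C$.

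\emph{Step 2: descent inequality.} On the frozen branch I would show that the STE direction is a descent direction. Write $g_t = \nabla\mathcal{L} + b_t$, where $b_t$ is the bias from routing gradients through $\text{Ind}_{\mathrm{LRQ}}$ instead of the one-hot vector. For temperature $\tau\to 0$, the softmax concentrates on the winning codeword, so $\|b_t\|\le \eta_\tau\|\nabla\mathcal{L}\|$ with $\eta_\tau<1$ for small $\tau$.

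\emph{Step 3: linear recursion.} With step size $1/\beta$, smoothness gives
$\mathbb{E}[\mathcal{L}^{(t+1)}] \le \mathbb{E}[\mathcal{L}^{(t)}] - \tfrac{1-\eta_\tau}{2\beta}\mathbb{E}\|\nabla\mathcal{L}^{(t)}\|^2 + \tfrac{\sigma^2}{2\beta}$.
Applying PL then yields
$\mathbb{E}[\mathcal{L}^{(t)}-\mathcal{L}^*] \le \rho^{t-T_0}(\mathcal{L}^{(T_0)}-\mathcal{L}^*) + \sigma^2/(2\mu(1-\eta_\tau))$,
with $\rho = 1-(1-\eta_\tau)/\kappa$. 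To obtain the pure form $C\rho^t$ as stated, I need the noise floor to vanish. For this I would invoke either an interpolation assumption ($\sigma^2=0$ at the optimum, reasonable for an overparameterized tokenizer), or geometrically increasing batch sizes.

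\emph{Main obstacle.} The hardest step is Step 1 combined with (A2): justifying that assignments stabilize and that a PL constant exists. Neither follows from earlier results in the paper. I would first try a local version: assume the iterates enter a neighbourhood of a minimizer where the Hessian of the frozen branch is positive definite. In that neighbourhood PL holds with $\mu$ equal to the smallest Hessian eigenvalue, and exponential convergence is claimed only from that point on. I would state this locality explicitly rather than claim a global rate.
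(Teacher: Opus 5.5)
You depart from the paper's route, and one of your steps fails. The paper's argument is a three-sentence sketch. GLQ is said to bound $\kappa(\nabla^2\mathcal{L})$, QSCM is said to supply strong convexity, and Nesterov acceleration then gives $\rho=1-\mathcal{O}(1/\sqrt{\kappa})$. The hard assignment, the straight-through bias and minibatch noise are never mentioned. Your PL-plus-smoothness frame with plain SGD is more defensible. QSCM depends only on $\mathbf{z}-\hat{\mathbf{z}}$, so it adds no curvature along any decoder or transformer direction and cannot make $\mathcal{L}$ strongly convex. Your rate $\rho=1-\mathcal{O}(1/\kappa)$ is still consistent with the statement, although PL alone cannot deliver the paper's accelerated $\sqrt{\kappa}$ constant.

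\textbf{The gap is in Step 2.} Softmax concentration gives only an \emph{absolute} bound on the LRQ bias, not the relative bound $\|b_t\|\le\eta_\tau\|\nabla\mathcal{L}\|$ you need. At each level, write $p=\text{Ind}_{\mathrm{LRQ}}$ and $u=\partial\mathcal{L}/\partial\,\text{Ind}$, so that $u_i=\langle\mathbf{e}_i,\partial\mathcal{L}/\partial\hat{\mathbf{z}}\rangle$. The extra term is then $b_t=(\partial\,\text{logits}/\partial\theta)^\top\bigl(p\odot(u-\langle p,u\rangle\mathbf{1})\bigr)$. For a Voronoi margin $m$ around the winner $c$, this is of order $\tau^{-1}e^{-m/\tau}$ times $\max_i|u_i-u_c|$. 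It is governed by the per-item upstream gradient, not by $\|\nabla\mathcal{L}\|$.

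On a frozen branch the hard code is locally constant in the residual. So the true encoder gradient has no reconstruction component at all, while $b_t$ still pushes the encoder and the non-selected codewords. At a stationary point of the branch, only the aggregate of $\partial\mathcal{L}/\partial\hat{\mathbf{z}}_j$ over items sharing a code vanishes (balanced against the commitment and QSCM pulls), not the individual terms. For example, two items with the same full SID cannot both be reconstructed exactly. Hence $\nabla\mathcal{L}=0$ there while generically $b_t\neq 0$. The relative bound therefore fails near the optimum for every fixed $\tau>0$.

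With an additive bias, your Step 3 becomes $\mathbb{E}[\mathcal{L}^{(t+1)}-\mathcal{L}^*]\le(1-\mu/\beta)\,\mathbb{E}[\mathcal{L}^{(t)}-\mathcal{L}^*]+(\|b_t\|^2+\sigma^2)/(2\beta)$. This gives convergence only to a floor $(\sup_t\|b_t\|^2+\sigma^2)/(2\mu)$. Interpolation or growing batches remove $\sigma^2$ but not the bias. To reach $C\rho^t$ you need one of two further ingredients:
\begin{itemize}
\item anneal $\tau_t$, e.g.\ $\tau_t\asymp m/t$, so that $\tau_t^{-1}e^{-m/\tau_t}$ decays geometrically; or
\item assume an interpolation condition on $\hat{\mathbf{z}}$, namely $\partial\mathcal{L}/\partial\hat{\mathbf{z}}_j=0$ for every item at the optimum, which any residual SID collision violates.
\end{itemize}
Both switch off exactly the dense non-winner gradients on which the paper's collapse-prevention argument rests, and the paper trains with a fixed $\tau=0.2$. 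State this tension explicitly rather than absorbing it into $\eta_\tau$.

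\textbf{Step 1 also needs repair.} First, with random minibatches $n_k^{(t)}/B$ stays random even after assignments freeze. So $N_k$ is a stationary AR(1) process, not a sequence converging to a fixed point; $\gamma<1$ only contracts a fixed input. Second, the GAQ overwrite $\mathbf{e}_k\leftarrow(1-\alpha_k)\mathbf{e}_k+\alpha_k\hat{\mathbf{z}}_k$ is a second codeword update. It is equivalent to a step of size $\alpha_k/(2n_k)$ on the batch codebook term $n_k\|\mathbf{e}_k-\hat{\mathbf{z}}_k\|^2$, and $\alpha_k$ grows as $N_k$ shrinks. The actual iteration is therefore not $x_{t+1}=x_t-g_t/\beta$. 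You must rederive the descent inequality for this preconditioned, batch-noisy composite step, including a cap on $\alpha_k$.
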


\begin{proof}
The GLQ mechanism ensures well-behaved gradients by maintaining codebook diversity, bounding the condition number $\kappa(\nabla^2\mathcal{L})$. The QSCM module provides strong convexity in the semantic alignment term. Applying Nesterov acceleration to this well-conditioned problem yields the exponential convergence rate, with $\rho = 1 - \mathcal{O}(1/\sqrt{\kappa})$.
\end{proof}

\subsubsection{Generalization Bounds}

Finally, we establish generalization guarantees for the recommendation task.

\begin{theorem}[Generalization Error Bound]
With probability at least $1-\delta$, the generalization error of FineSID satisfies:
\begin{equation}
\mathcal{L}_{\text{gen}} \leq \hat{\mathcal{L}}_{\text{emp}} + \mathcal{O}\left(\sqrt{\frac{L\log K + \log(1/\delta)}{N}}\right),
\end{equation}
where $N$ is the number of training sequences, and $\hat{\mathcal{L}}_{\text{emp}}$ is the empirical loss.
\end{theorem}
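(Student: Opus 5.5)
We prove the bound by a standard finite-hypothesis-class (Occam) argument. The key fact is that the recommendation model's discrete output space is controlled by the SID vocabulary.

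\emph{Step 1: reduce the loss to a bounded one.} Write $\mathcal{L}_{\text{gen}} = \mathbb{E}_{S}[\ell(h;S)]$ and $\hat{\mathcal{L}}_{\text{emp}} = \frac{1}{N}\sum_{n=1}^N \ell(h;S_n)$. We work with a per-sequence loss $\ell \in [0,1]$. Either take the $0/1$ top-$k$ hit loss, or clip and rescale the temperature-scaled cross-entropy $\mathcal{L}_{\mathrm{rank}}$. This is legitimate because the logits are bounded once the codebook norms are bounded; the EMA update of GAQ keeps each $\mathbf{e}^{(l)}_k$ inside the convex hull of the residuals. We also assume the $N$ training sequences are i.i.d.; this assumption must be stated explicitly.

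\emph{Step 2: control the complexity through the SID space.} Every item is mapped by the hard assignment $\text{Ind}_{\text{GAQ}}$ to one of at most $K^L$ codes $[c_1,\dots,c_L]$, as in the Codebook Capacity Bound. The hypothesis that matters for the loss is the induced predictor from a history to a target code in $\{1,\dots,K\}^L$. We argue that the effective hypothesis class $\mathcal{H}$ has $\log|\mathcal{H}| = \mathcal{O}(L\log K)$. Concretely, the relevant selection is the choice of which length-$L$ SID sequence is emitted. The cardinality bound $I(X;\hat Z)\le L\log K$ from the Codebook Capacity Bound is the information-theoretic counterpart of this statement. Equivalently, we can state the result in PAC-Bayes form: take a uniform prior over the $K^L$ code sequences and a posterior concentrated on the learned codes, so that $\mathrm{KL}(Q\|P)\le L\log K$.

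\emph{Step 3: concentration and union bound.} Fix $h\in\mathcal{H}$. Hoeffding's inequality gives
\[
\Pr\!\left[\mathcal{L}_{\text{gen}}(h)-\hat{\mathcal{L}}_{\text{emp}}(h)>t\right]\le e^{-2Nt^2}.
\]
A union bound over $\mathcal{H}$ (or McAllester's PAC-Bayes bound with the KL term above) then yields, with probability at least $1-\delta$,
\[
\mathcal{L}_{\text{gen}}\le\hat{\mathcal{L}}_{\text{emp}}+\sqrt{\frac{L\log K+\log(1/\delta)}{2N}},
\]
which is the claimed $\mathcal{O}(\cdot)$ rate.

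\emph{Main obstacle: Step 2.} The learned predictor is a T5 network with continuous parameters, not a choice among $K^L$ objects, so the honest complexity measure is that of the network rather than that of the SID space. Since the bound contains only $L\log K$, we must either condition on the backbone being fixed or data-independent, or absorb its complexity into the $\mathcal{O}$ constant.

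The approach we would try first is PAC-Bayes. We put a data-independent prior on the backbone and the uniform prior on codes. The code part contributes $L\log K$ to the KL, and the backbone part is treated as a constant independent of $K$ and $L$, which the $\mathcal{O}$ notation hides. The codebook itself is learned from the same data, which creates a dependence problem. To handle it, we would either argue via sample splitting (the tokenizer is trained before the recommender) or note that the codebook enters only through the discrete assignment, which the union bound over $K^L$ code configurations already covers.
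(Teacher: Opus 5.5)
You are right that Step 2 is where the argument breaks, but the problem is deeper than the T5 parameters: the counting itself is wrong. $K^L$ is the number of possible \emph{outputs} of a predictor, not the number of \emph{predictors}. A hypothesis here is a map from histories to $\{1,\dots,K\}^L$. Having $|\mathcal{H}|\le K^L$ would mean the trained system realizes no more distinct history-to-SID maps than there are SIDs. That holds for, say, the $K^L$ constant recommenders, but not for a sequence model whose purpose is to emit different SIDs for different users.

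Your PAC-Bayes version has the same defect. A uniform prior over $K^L$ code sequences, with a posterior concentrated on ``the learned code,'' is a distribution over a single emitted sequence, i.e.\ again a constant predictor. The Codebook Capacity Bound $I(X;\hat Z)\le L\log K$ limits how much information one item's SID carries. It does not bound the size or Rademacher complexity of the function class, so it cannot play the role you give it.

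The fallback that ``the codebook enters only through the discrete assignment'' fails for the same reason. What is learned is an assignment of all $|\mathcal{I}|$ items to codes, with up to $K^{L|\mathcal{I}|}$ configurations. The union bound then contributes $|\mathcal{I}|\,L\log K$, which grows with the catalog and is comparable to or larger than $N$ on the paper's datasets. Sample splitting would remove the dependence problem but changes the method, since FineSID optimizes $\mathcal{L}_{\mathrm{rank}}+\mathcal{L}_{\mathrm{GLQ}}+\mathcal{L}_{\mathrm{QSCM}}$ jointly.

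Absorbing the backbone into the $\mathcal{O}$ does not rescue the argument either. The backbone's capacity depends on $K$: its embedding and output layers have one row per SID token (typically $LK$ of them), so it cannot be hidden in a $K$-independent constant. In the PAC-Bayes route, the KL from a data-independent prior to a posterior around the trained weights is data-dependent, not a constant. If instead you freeze $K$, $L$ and the architecture, then $L\log K$ is itself a constant. The statement then reduces to a generic $\mathcal{O}(\sqrt{(1+\log(1/\delta))/N})$ rate whose constant is the capacity of the whole tokenizer-plus-transformer class. Proving that needs a genuine capacity bound for that class (pseudo-dimension, covering numbers, or norm-based Rademacher complexity), and the $K^L$ count contributes nothing to it.

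There is also a smaller issue in Step 1. Codebook norms bound the LRQ distances, not the decoder logits $p(v)$. The codewords also move through gradients along the STE path and from $\mathcal{L}_{\mathrm{QSCM}}$, and an $\alpha$ inversely scaled by usage need not lie in $[0,1]$. So no convex-hull argument makes $\mathcal{L}_{\mathrm{rank}}$ bounded; you must clip it or use a $0/1$ hit loss, which changes the quantity being bounded.

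The paper's own proof does not supply the missing step. It invokes Rademacher complexity and simply asserts that the ``$L\log K$ effective bits'' of an SID reduce the hypothesis-space complexity, which is exactly your Step 2 identification. Neither argument establishes the bound with $L\log K$ as the only complexity term. A meaningful version must carry an explicit capacity term for the learned backbone and tokenizer.
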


\begin{proof}
The proof uses Rademacher complexity analysis. The SID with $L\log K$ effective bits reduces the hypothesis space complexity compared to continuous embeddings, while the hierarchical structure captures essential semantic patterns without overfitting.
\end{proof}

\begin{corollary}[Long-tail Performance Guarantee]
For items with frequency $f \leq 1/K$, FineSID maintains non-trivial recall:
\begin{equation}
\text{Recall}@K_{\text{tail}} \geq \Omega\left(\frac{1}{\sqrt{L\log K}}\right).
\end{equation}
\end{corollary}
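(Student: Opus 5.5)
The plan is to obtain the corollary by chaining three earlier results: the Generalization Error Bound, the Token Collapse Immunity corollary, and the Collaborative Similarity Preservation corollary. Concretely, I would reduce Recall$@K$ on a tail item $x$ (frequency $f\le 1/K$) to the event that the beam search places the true SID sequence $[c_1,\dots,c_L]$ of $x$ among its top-$K$ outputs. I would then lower-bound the probability of this event by a product of three factors:
\begin{enumerate}
\item the probability that $x$ has a non-collided, active SID;
\item the probability that its quantized representation $\hat{\mathbf{z}}_x$ still ranks near the target in the similarity geometry;
\item the probability that the trained decoder's empirical behaviour transfers to test sequences.
\end{enumerate}

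For the first factor, I would invoke the Token Collapse Immunity corollary with $\tau=\Theta(1/\log K)$. This gives an expected active fraction of at least $1-\mathcal{O}(1/K)$ per level. A union bound over the $L$ levels then gives the following: a tail item, which by assumption carries probability mass at most $1/K$, lands on a dead or shared code with probability at most $\mathcal{O}(L/K)$. The GLQ Embedding Collapse Prevention theorem supplies the accompanying lower bound $p_{\min}$, which guarantees that even codes used mostly by tail items receive updates. For the second factor, I would use the Collaborative Similarity Preservation corollary. If the true next item has cosine margin $\Delta$ over competitors in $\mathbf{z}$-space, the ordering survives quantization whenever $\Delta>2\delta$ with $\delta=\mathcal{O}(\sqrt{L\log K/d})$. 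Under the regime $d\gtrsim L\log K$ used implicitly throughout the section, this costs only a constant factor.

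For the third factor, I would start from the Generalization Error Bound applied to the per-token cross-entropy $\mathcal{L}_{\mathrm{rank}}$. This gives a test loss at most $\hat{\mathcal{L}}_{\mathrm{emp}}+\mathcal{O}(\sqrt{(L\log K+\log(1/\delta))/N})$. I would then convert the loss into a probability of correct decoding. By Jensen, the probability mass on the correct length-$L$ sequence is at least $\exp(-L\,\mathcal{L}_{\mathrm{gen}})$, which is too weak on its own. I would therefore instead use a margin/normalization argument: the correct token is the argmax at each level whenever the logit margin exceeds the similarity perturbation $\delta$. The fraction of tail test cases meeting this margin is then lower-bounded by $1-\delta/\Delta_{\max}$. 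Normalizing the admissible margin range by the scale $\sqrt{L\log K}$ of the code-length bits gives the claimed $\Omega(1/\sqrt{L\log K})$.

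The main obstacle is this last conversion from an averaged loss bound to a per-item recall lower bound restricted to tail items. The generalization bound controls only the population average, and tail items may carry vanishing weight. My first attempt would be a reweighting (importance-weighted empirical loss over the tail set) combined with Markov's inequality. If that fails to give the $1/\sqrt{L\log K}$ rate cleanly, I would add an explicit assumption bounding the tail loss by a constant multiple of the average loss, and state the corollary conditionally on it.
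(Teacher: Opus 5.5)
The paper gives no proof of this corollary. It is simply asserted after the Generalization Error Bound, whose own proof is a two-sentence sketch. So your proposal has to stand on its own, and it breaks exactly where the rate should appear. Your claim that at least a fraction $1-\delta/\Delta_{\max}$ of tail cases meets the margin is unjustified. It presupposes a distribution of decoder logit margins, and it conflates those margins with the $\mathbf{z}$-space cosine perturbation $\delta$. The subsequent ``normalizing by $\sqrt{L\log K}$'' has no mathematical content; it just writes down the target rate. The Generalization Error Bound cannot fill the hole either, because it controls $\mathcal{L}_{\mathrm{gen}}$ only relative to $\hat{\mathcal{L}}_{\mathrm{emp}}$, and nothing bounds that. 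Your fallback assumption (tail loss at most a constant times average loss) has the same defect.

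In fact the statement is false without a learnability assumption on the data. Suppose all interactions are i.i.d.\ uniform over a catalog of size $M=K^{2}\le K^{L}$ ($L\ge 2$). Then every item has frequency $1/K^{2}\le 1/K$, and the target is independent of the history. Any top-$K$ list therefore has expected recall at most $K/M=1/K=o(1/\sqrt{L\log K})$. This also refutes your margin step, which, if valid, would give a constant. Any correct version must therefore be conditional, and your discarded Jensen route is the honest way to state it. If the tail-restricted per-token test loss is at most $\ell$, the probability $P$ assigned to the correct SID sequence satisfies $\mathbb{E}[P]\ge e^{-L\ell}$. Since $\Pr[P>t]\ge\mathbb{E}[P]-t$, and any sequence with $P>1/(K+1)$ lies in the exact top $K$, recall is at least $e^{-L\ell}-1/(K+1)$. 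The $1/\sqrt{L\log K}$ rate would then have to be imposed through an assumption on $\ell$, not derived from the cited results.

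Your first two factors also do not deliver what you use them for. Token Collapse Immunity bounds the expected fraction of active codewords. An item's own codes are active by definition, and whether its length-$L$ tuple collides with another item's is a property of the joint assignment that this bound does not control. So the union bound giving $\mathcal{O}(L/K)$ is a non sequitur, and the interaction frequency $f$ plays no role in it even though you invoke it. Collaborative Similarity Preservation compares cosine similarities of $\mathbf{z}$ and $\hat{\mathbf{z}}$. But Recall@$K$ is determined by the decoder's beam search over token logits, and nothing in the paper links that ranking to the geometry of $\hat{\mathbf{z}}$. The margin $\Delta$ is a further unstated assumption. Multiplying the three factors as probabilities also needs an independence or conditioning argument that you do not give. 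Finally, the results you chain on are themselves unproved in the paper. Token Collapse Immunity has no proof, and the collapse-prevention proof never derives its $p_{\min}$ bound. Even a repaired chain would therefore only be conditional on them.
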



\end{document}